\newtheorem{theorem}{Theorem}
\newtheorem{corollary}{Corollary}
\newtheorem{definition}{Definition}
\definecolor{darkblue}{rgb}{0, 0, 0.5}
\title{Evaluating and Designing Sparse Autoencoders \\by Approximating Quasi-Orthogonality}
\author{Sewoong Lee, Adam Davies$^*$, Marc E. Canby\thanks{These authors contributed equally to this work.}~~\& Julia Hockenmaier \\
Siebel School of Computing and Data Science\\
University of Illinois Urbana-Champaign\\
Illinois, IL 61801, USA \\
\texttt{\{samuel27, adavies4, marcec2, juliahmr\}@illinois.edu}
}
\begin{document}
\newcommand{\prettygraybox}[1]{%
    \begin{tcolorbox}[
      width=1\textwidth, 
      colback=gray!15, 
      colframe=gray!15, 
      coltext=black, 
      boxsep=3pt, 
      left=7pt,
      right=7pt,
      arc=0.2mm, 
      box align=center, 
      valign=center, 
    ]
    #1
    \end{tcolorbox}
}

\ifcolmsubmission
\linenumbers
\fi

\maketitle

\begin{abstract}
Sparse autoencoders (SAEs) are widely used in mechanistic interpretability research for large language models; however, the state-of-the-art method of using $k$-sparse autoencoders lacks a theoretical grounding for selecting the hyperparameter $k$ that represents the number of nonzero activations, often denoted by $\ell_0$. In this paper, we reveal a theoretical link that the $\ell_2$-norm of the sparse feature vector can be approximated with the $\ell_2$-norm of the dense vector with a closed-form error, which allows sparse autoencoders to be trained without the need to manually determine $\ell_0$. Specifically, we validate two applications of our theoretical findings. First, we introduce a new methodology that can assess the feature activations of pre-trained SAEs by computing the theoretically expected value from the input embedding, which has been overlooked by existing SAE evaluation methods and loss functions. Second, we introduce a novel activation function, top-AFA, which builds upon our formulation of approximate feature activation (AFA). This function enables top-$k$ style activation without requiring a constant hyperparameter $k$ to be tuned, dynamically determining the number of activated features for each input. By training SAEs on three intermediate layers to reconstruct GPT2 hidden embeddings for over 80 million tokens from the OpenWebText dataset, we demonstrate the empirical merits of this approach and compare it with current state-of-the-art $k$-sparse autoencoders.
Our code is available at:
\url{https://github.com/SewoongLee/top-afa-sae}.
\end{abstract}

\section{Introduction}

Language models pack meaning into dense vectors, but what if we could unpack them into separate, understandable pieces?
To make this possible, sparse autoencoders (SAEs) have brought significant advances to mechanistic interpretability by demonstrating that dense embeddings inside language models can be effectively decomposed into a linear combination of human-interpretable feature vectors \citep{elhage2022superposition, bricken2023towards, huben2023sparse, lieberum2024gemma}. Despite recent advances in sparse autoencoder evaluation, a crucial aspect has remained overlooked: the relationship between input embeddings and sparse feature vectors.
Existing approaches enforce sparsity -- either by limiting the number of active units or by penalizing the number of nonzero values -- with little justification.
In other words, although existing methods aim to construct sparse features corresponding to inputs, the selection of sparsity levels in these methods is independent of the inputs themselves.
This missing link -- between the input and its feature representation -- is at the heart of what sparse autoencoders are supposed to recover and is the focus of this paper.

To address this issue, we take a fundamentally different approach: rather than evaluating SAEs based solely on sparsity \textit{or} reconstruction error, we focus on the underlying \textit{relationship} between input embeddings and their corresponding feature activations. 
This shift in perspective leads us to develop a new theoretical approximation for feature activation, along with practical tools for evaluating and designing sparse autoencoders under this framework.
Our approach applies to pretrained SAEs commonly used in mechanistic interpretability, including models based on different sparsity penalties, such as GPT-2 Small \citep{bloom2024gpt2residualsaes} and Gemma Scope \citep{lieberum2024gemma}. 
Furthermore, when training SAEs from scratch, our approach achieves reconstruction loss better than that of state-of-the-art $k$-sparse SAEs \citep{makhzani2013k, gao2024scaling, bussmann2024batchtopk}, without requiring the hyperparameter $k$ to be tuned.
Our findings not only bridge a theoretical gap in current SAE evaluations but also open the door to novel avenues for experimental exploration.

The contributions of this paper are:
\begin{itemize}
    \item We introduce Approximate Feature Activation (AFA), a closed-form estimation of the magnitude of sparse feature activations with provable error bounds.
    \item We present the ZF Plot to visualize and diagnose over- or under-activation of features based on the theoretical framework of AFA.
    \item We formalize \(\varepsilon\)-quasi-orthogonality as a geometric constraint arising from the superposition hypothesis, connecting it to the Johnson–Lindenstrauss Lemma, and propose \(\varepsilon_{\text{LBO}}\), the lower bound of quasi-orthogonality, a novel metric for evaluating SAE feature space.
    \item We propose top-AFA, a norm-matching activation function that adaptively selects the number of active features for each input vector without tuning $k$. Combined with a norm-matching loss \(\mathcal{L}_\text{AFA}\), this leads to a new SAE architecture, top-AFA SAE, that achieves better reconstruction performance compared to state-of-the-art top-$k$ and batch top-$k$ SAEs, while also offering stronger theoretical justification.
\end{itemize}

\section{Preliminaries}

\paragraph{Notation} We write vectors in bold as \( \mathbf{v} = (v_1, v_2, \dots, v_n) \) and refer to their scalar components \(v_i\) in italics. We use \( \mathbf{v}_i \) to denote the \(i\)-th vector in a collection of vectors, and use \( \mathbf{v}(\cdot) \) to denote a function that returns a vector-valued output. \( A_{i,j} \) denotes the \((i,j)\)-th entry of a matrix \( A \).

\subsection{Sparse Autoencoders}
The hidden embeddings of language models are densely packed with information, encoding overlapping features in high-dimensional space \citep{elhage2022superposition}.  
Individual dimensions do not correspond to specific, human-understandable concepts.  
To unpack these embeddings into more interpretable pieces, sparse autoencoders (SAEs) learn to decompose them into sparse activation vectors that weight learned feature directions.  

Formally, we denote an input sequence by \( \mathbf{x} \), which is transformed to an embedding vector \( \mathbf{z}^{(l)}(\mathbf{x}) \in \mathbb{R}^d \) in LLMs, which is taken from the residual stream, the hidden embedding vector passed through the residual connections between transformer layers, of the last token index at layer \( l \) (see Appendix~\ref{sec:terminology} for details). 
This embedding is used as the input to the SAE, and we write it as \( \mathbf{z}(\mathbf{x}) \) or \( \mathbf{z} \) for simplicity when the layer index and input are clear from context.

An SAE consists of an encoder and decoder, defined as:
\[
    \mathbf{f}(\mathbf{z}) = \sigma(W_{\text{enc}} \mathbf{z} + \mathbf{b}_{\text{enc}}), \quad
    \hat{\mathbf{z}} = W_{\text{dec}} \mathbf{f}(\mathbf{z}) + \mathbf{b}_{\text{dec}},
\]
where \(\mathbf{f}(\mathbf{z}) \in \mathbb{R}^h\) is a sparse latent vector and \(\hat{\mathbf{z}} \in \mathbb{R}^d\) is the reconstruction of \(\mathbf{z}\).  
The encoder matrix \(W_{\text{enc}} \in \mathbb{R}^{h \times d}\), decoder matrix \(W_{\text{dec}} \in \mathbb{R}^{d \times h}\), and biases \(\mathbf{b}_{\text{enc}} \in \mathbb{R}^h\), \(\mathbf{b}_{\text{dec}} \in \mathbb{R}^d\) are learned.  
The decoder matrix \(W_{\text{dec}}\) is often referred to as a \textit{dictionary}, as it contains the directions for reconstructing the input from the feature space.
The activation function \(\sigma(\cdot)\) (e.g., ReLU or Top-\(k\) \citep{makhzani2013k}) enforces non-negativity or sparsity.
Training minimizes a loss function that balances reconstruction and sparsity:
\begin{equation} \label{eq:sae}
    \mathcal{L}(\mathbf{z}) = \|\mathbf{z} - \hat{\mathbf{z}}\|_2^2 + \lambda_\text{sparsity} S(\mathbf{f}(\mathbf{z})) + \alpha \mathcal{L}_{\text{aux}},
\end{equation}
where \(S(\mathbf{f})\) is a sparsity penalty (e.g., \(\|\mathbf{f}\|_1\), \(\|\mathbf{f}\|_0\)).\footnote{The $\ell_0$-norm of a vector $\mathbf{f}$, denoted as $\|\mathbf{f}\|_0$, counts the number of non-zero elements in $\mathbf{f}$. Although it is not a norm in the mathematical sense, it is widely referred to as $\ell_0$ ``norm'' in this context due to its usefulness in expressing sparsity.}
\(\lambda_\text{sparsity}\) controls the strength of the sparsity constraint, and \(\mathcal{L}_{\text{aux}}\) is an auxiliary loss that prevents latent units from becoming inactive via Ghost Grads \citep{jermyn2024ghostgrads}.

\subsection{Hypotheses Behind Sparse Autoencoders}
\begin{equation} \label{eq:sae_assumptions}
\underbrace{\hat{\mathbf{z}} = W_{\text{dec}} \mathbf{f}(\mathbf{z}) + \mathbf{b}_{\text{dec}}}_{\text{Linear Representation}}, \quad \text{where }\quad \mathbf{z} \in \mathbb{R}^d, \; \mathbf{f}(\mathbf{z}) \in \mathbb{R}^h, \; \text{and} \underbrace{h > d}_{\text{Superposition}}
\end{equation}
As summarized in Equation~(\ref{eq:sae_assumptions}), the design of sparse autoencoders is built on certain assumptions: the linear representation hypothesis and the superposition hypothesis. 
In this section, we formally present these two hypotheses and provide intuition behind their roles in motivating the architecture.

\paragraph{Linear Representation.} In this paper, unless otherwise specified,\footnote{For more details on the LRH, see Appendix ~\ref{sec:lrh_classification}.} we use the term \textit{linear representation hypothesis (LRH)} with the following definition as discussed in \citet{elhage2022superposition, bricken2023towards, nanda-etal-2023-emergent, lieberum2024gemma, lewissmith2024, engels2024decomposing, engels2024not, laptev2025analyze}:

\prettygraybox{
\textbf{Linear Representation Hypothesis}~~~
Internal activations of neural networks can be represented as linear combinations of vectors. Formally, for a given input \(\mathbf{x}\) to the neural network model,
hidden embeddings \( \mathbf{z}(\mathbf{x}) \in \mathbb{R}^d \) can be represented by a linear transformation\footnote{This hypothesis also holds under affine transformations if we define \(\mathbf{f}'=[\mathbf{f}^\top|~\|\mathbf{b}\|_2]^\top\), where \(\mathbf{b}\) is the translation bias.}
of a feature activation vector \( \mathbf{f} \in \mathbb{R}^h \) with a constant weight matrix \( W \in \mathbb{R}^{d \times h} \),
\[
\mathbf{z}(\mathbf{x}) = W \mathbf{f}
\]
}

\paragraph{Superposition.} In machine learning interpretability studies, the term \textit{superposition} refers to the phenomenon in which models represent more features than the dimensionality of their representations \citep{elhage2022superposition}. The statement that there can be more features than neurons in neural networks implies that neural network models can exploit the property of high dimension supported by the Johnson-Lindenstrauss lemma \citep{lindenstrauss1984extensions}.
To formally state:
\prettygraybox{
\textbf{Superposition Hypothesis.}~~~
Let \(\mathbf{z}(\mathbf{x}) \in \mathbb{R}^d\) denote the hidden embeddings for a given input \(\mathbf{x}\) to an LLM. There exists a function \(\Phi: \mathbb{R}^h \to \mathbb{R}^d\) for feature vectors \(\mathbf{f} \in \mathbb{R}^h\) such that:
\[
\mathbf{z}(\mathbf{x}) = \Phi(\mathbf{f}) \text{, where } h > d.
\]
}

\subsection{Theoretical Tools for High-Dimensional Analysis}

\begin{theorem} [Johnson-Lindenstrauss Lemma] \label{theorem:jl}
Let $Q \subset \mathbb{R}^h$ be a set of $n$ points. For $\delta \in (0,1/2)$ and $d = \frac{20 \ln n}{\delta^2}$, there exists a linear mapping $\Psi: \mathbb{R}^h \to \mathbb{R}^d$ such that $\forall \mathbf{u},\mathbf{v} \in Q$:
\[
(1 - \delta) \|\mathbf{u} - \mathbf{v}\|_2^2 \leq \|\Psi(\mathbf{u}) - \Psi(\mathbf{v})\|_2^2 \leq (1 + \delta) \|\mathbf{u} - \mathbf{v}\|_2^2.
\]
\end{theorem}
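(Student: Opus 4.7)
The plan is to prove this by the probabilistic method: I would construct a random linear map and argue that, with positive probability, it satisfies the desired distortion bounds for every pair in $Q$ simultaneously; existence of a good deterministic map then follows. Concretely, I would take $\Psi(\mathbf{x}) = \frac{1}{\sqrt{d}} A\mathbf{x}$, where $A \in \mathbb{R}^{d\times h}$ has i.i.d.\ entries $A_{ij} \sim \mathcal{N}(0,1)$ (Gaussian projections are the cleanest choice, though Rademacher $\pm 1$ entries would also work with slightly different constants).

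The central analytic step is a single-vector concentration statement: for any fixed $\mathbf{x} \in \mathbb{R}^h$,
\[
\Pr\!\Bigl[\,\bigl|\|\Psi(\mathbf{x})\|_2^2 - \|\mathbf{x}\|_2^2\bigr| > \delta \|\mathbf{x}\|_2^2\,\Bigr] \;\leq\; 2\exp\!\bigl(-c\,\delta^2 d\bigr),
\]
for a numerical constant $c > 0$. I would derive this in two steps. First, by the $2$-stability of Gaussians, each row satisfies $\langle A_i,\mathbf{x}\rangle \sim \mathcal{N}(0,\|\mathbf{x}\|_2^2)$, so after normalizing, $d\|\Psi(\mathbf{x})\|_2^2/\|\mathbf{x}\|_2^2$ is a $\chi^2_d$ random variable. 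Second, I would apply the standard Chernoff-style tail bound for chi-squared variables (computing the moment generating function $\mathbb{E}[e^{tY}] = (1-2t)^{-d/2}$ and optimizing over $t$) to get the exponential concentration above with $c = 1/8$ or similar explicit constant.

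Next, I would apply this bound to every difference vector $\mathbf{x} = \mathbf{u}-\mathbf{v}$ with $\mathbf{u},\mathbf{v}\in Q$, of which there are at most $\binom{n}{2} < n^2/2$. By linearity of $\Psi$ and a union bound, the probability that any pair is distorted by more than $\delta$ is at most
\[
\binom{n}{2}\cdot 2\exp(-c\,\delta^2 d) \;<\; n^2 \exp(-c\,\delta^2 d).
\]
Plugging in $d = 20\ln n/\delta^2$ makes this upper bound strictly less than $1$ (the constant $20$ is chosen precisely to absorb the $n^2$ factor for the $c$ coming out of the chi-squared tail bound), so with positive probability, a single realization of $A$ yields a $\Psi$ satisfying the two-sided bound for every pair in $Q$ simultaneously. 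Therefore such a linear $\Psi$ exists, which is what was claimed.

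The main obstacle I expect is bookkeeping with the constants so that the stated value $d = 20\ln n/\delta^2$ falls out cleanly, rather than the cleanliness of the overall argument: the chi-squared tail bound gives $\Pr[|Y/d - 1| > \delta] \le 2e^{-d(\delta^2 - \delta^3)/4}$, and one must use $\delta < 1/2$ to simplify $\delta^2 - \delta^3 \ge \delta^2/2$ (hence the hypothesis $\delta \in (0,1/2)$). Everything else — the Gaussian construction, the reduction to a single-vector bound via linearity, and the union bound — is structurally standard.
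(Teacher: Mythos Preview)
Your proof sketch is correct and is precisely the standard probabilistic argument for the Johnson--Lindenstrauss Lemma: Gaussian random projection, reduction to $\chi^2_d$ concentration for a single vector, then a union bound over the $\binom{n}{2}$ pairs; the constant $20$ together with the hypothesis $\delta\in(0,1/2)$ indeed suffices to drive the failure probability below $1$. The paper itself does not give a proof of this theorem but simply cites the original source and standard expository references (Johnson--Lindenstrauss, Vempala, and Kakade's lecture notes), and in the appendix it explicitly notes that the lemma ``is typically proven using $f(\mathbf{b}_i)=\tfrac{1}{\sqrt{d}}A\mathbf{b}_i$ where $A_{i,j}\sim\mathcal{N}(0,1)$'' --- exactly your construction --- so your sketch matches what those references contain.
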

\begin{proof}[Proof of Theorem] See~\cite{lindenstrauss1984extensions, vempala2005random, KakadeJL}.
\end{proof}

\begin{definition}[\(\varepsilon\)-Quasi-Orthogonal Set {\citep{KAINEN19937, kainen2020quasiorthogonal}}] \label{def:quasi-orthogonal}
    Let \( S^{d-1} = \{\mathbf{v} \in \mathbb{R}^d : \|\mathbf{v}\|_2 = 1\} \) be the unit sphere in the \( d \)-dimensional Euclidean space. For \( \varepsilon \in [0,1) \), a subset \( T \subset S^{d-1} \) is called an \textbf{$\varepsilon$-quasi-orthogonal set} if:
    \[
    \mathbf{u} \neq \mathbf{v} \in T \Rightarrow |\mathbf{u} \cdot \mathbf{v}| \leq \varepsilon.
    \]
\end{definition}
\section{Related Work}
\paragraph{Finding Interpretable Dictionaries.} Initially, \cite{interim2022} investigated a toy setting where dense embeddings were explicitly constructed from combinations of ground-truth sparse feature vectors. This reversed setup enabled testing whether SAEs can recover known ground-truth features. The key finding was that, given a properly tuned $\ell_1$ sparsity coefficient, SAEs could accurately recover the ground-truth components, identifying a “Goldilocks zone” for sparsity regularization. This foundational insight later inspired follow-up work by \citet{huben2023sparse}, extending the idea to real-world settings, showing that SAEs trained on models like Pythia-70M can discover highly interpretable features.

\paragraph{Pre-trained SAEs: GPT-2 Small and Gemma Scope.} Building on these insights, \citet{bloom2024gpt2residualsaes} successfully extended sparse autoencoder training to GPT-2 Small \citep{radford2019language}, releasing a pretrained SAE model widely used in follow-up research. While retaining the $\ell_1$ sparsity penalty, their model introduced several key architectural improvements: ghost gradients for addressing inactive features and a unit-norm decoder. GPT-2 Small has since become a standard model for SAE research due to its manageable size and representative embedding structure, and has been studied in follow-up works such as \citet{chaudhary2024evaluating, bussmann2024batchtopk, minegishi2025rethinking}.

\citet{lieberum2024gemma} introduced the Gemma Scope SAE, trained on Gemma 2’s hidden embeddings \citep{gemma_2024}. Unlike prior SAEs that relied on $\ell_1$ sparsity, Gemma Scope adopted an $\ell_0$ penalty to reduce \textit{shrinkage effects} \citep{rajamanoharan2024improving}, which had previously led to suppression of overall activations. Furthermore, they employed JumpReLU, an activation function that applies a non-zero threshold to mitigate interference caused by superposition. The resulting model has become one of the most widely adopted SAEs in the interpretability community \citep{neuronpedia}.

\paragraph{Top-$k$ Activation.} \citet{gao2024scaling} demonstrated the effectiveness of $k$-sparse autoencoders \citep{makhzani2013k}, which use a sparsity-enforcing activation function that retains only the $k$ largest pre-activations per input. This approach eliminates the need for a separate sparsity penalty so that $\lambda_{\text{sparsity}} = 0$ in Equation (\ref{eq:sae}). They also discovered a \textit{scaling law} between sparsity, measured by $\ell_0$, and reconstruction error, measured by mean square error (MSE), suggesting that there exists a empirical boundary that SAEs cannot surpass. However, the main limitation of top-$k$ is its reliance on a fixed $k$, which lacks a principled justification. To address this, \citet{bussmann2024batchtopk} proposed batch top-$k$, a variant that enforces sparsity on average across a batch rather than per input. Their method enables dynamic per-input activation while maintaining a fixed average sparsity, resulting in better reconstruction.

\begin{figure}[t]
\begin{center}
\includegraphics[scale=0.55]{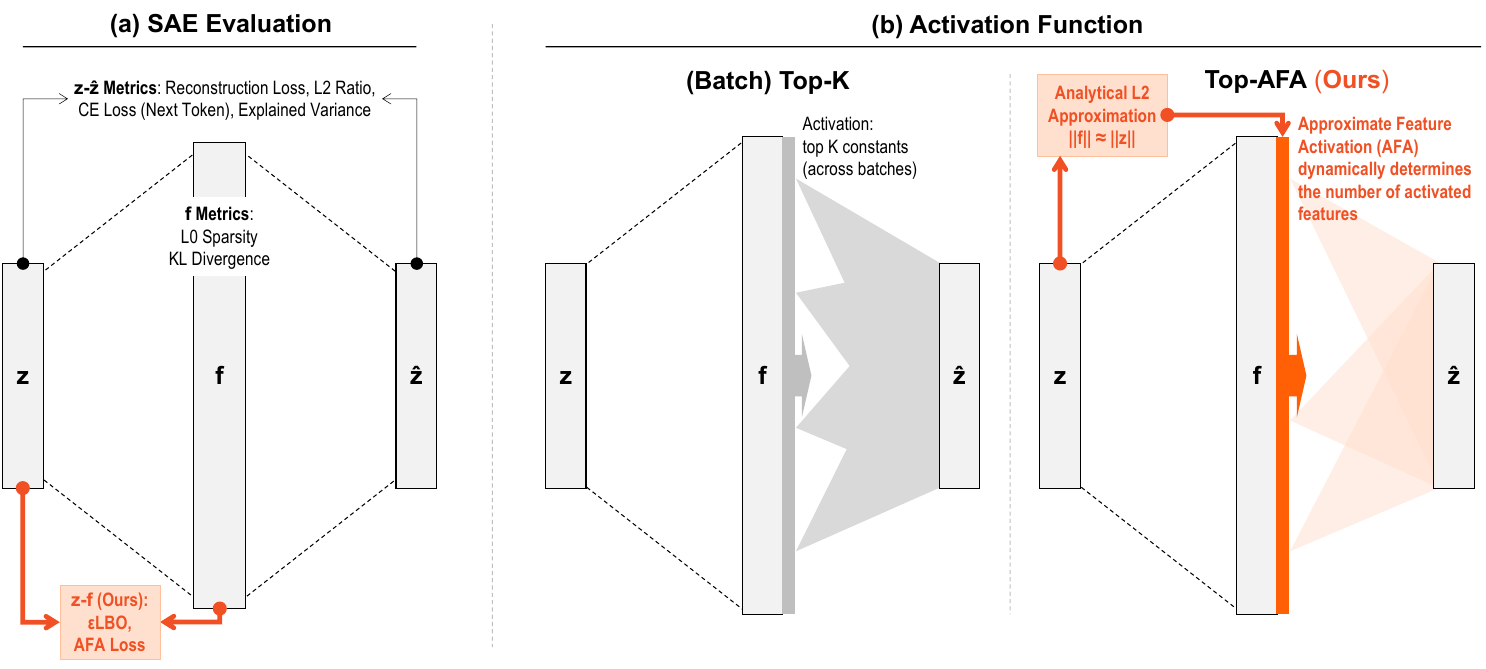}
\end{center}
\caption{\textbf{(a) Comparison of SAE evaluation approaches.} Existing unsupervised metrics, such as SAE Bench \citep{karvonen2025saebench}, assess either reconstruction quality (i.e., the relationship between \(\mathbf{z}\) and \(\hat{\mathbf{z}}\)) or the sparsity of \(\mathbf{f}\) in isolation. Our proposed AFA-based evaluation fills this gap by introducing a way to assess the alignment between input embeddings and feature activations. \textbf{(b) Comparison of fixed-\(k\) activation (e.g., top-\(k\), batch top-\(k\)) with our adaptive top-AFA.} While top-\(k\) uses a constant sparsity level, top-AFA dynamically selects the number of active features per input by matching the activation norm with the input norm.
}
\label{fig:intro}
\end{figure}

\paragraph{Evaluation of Sparse Autoencoders}
The question of what makes a good SAE has been actively explored and remains open.
\cite{till_2024_saes_true_features} explained that since there are infinitely many ways to decompose a dense embedding, failing to ensure near-orthogonality may hinder the recovery of the true underlying features. Although this work proposed orthogonality as a key geometric criterion, it lacked formalization or empirical validation.
Simple structural metrics, such as pairwise cosine similarity between decoder vectors, are easy to compute but remain model-centric and fail to capture data-driven behavior.
To address this, recent efforts, such as SAE Bench \citep{karvonen2025saebench}, have introduced both supervised (e.g., using LLM judges) and unsupervised metrics to evaluate SAE quality from an input-driven perspective.

\section{Approximating Feature Activation}
\label{sec:afa}

Can we define an input-driven, theoretically justifiable metric that reflects feature activation quality, even under superposition?
In this section, we define Approximate Feature Activation (AFA) as the closed-form solution that approximates the \(\ell_2\) norm of a ground-truth sparse feature vector. In our study, we derive a specific form called \textit{linear-AFA} under the simplified setting where the linear representation hypothesis is assumed to hold -- e.g., in SAEs with a one-layer decoder, as commonly used in recent work \citep{bloom2024gpt2residualsaes, lieberum2024gemma, he2024llamascopeextractingmillions, matryoshka2024, matryoshka_multilevel2024, gao2024scaling, bussmann2024batchtopk}.

First, we propose the notion of an \(\varepsilon\)-quasi-identity matrix, denoted as \(I^{\leq\varepsilon}\), which utilizes the definition of \(\varepsilon\)-quasi-orthogonality and serves as a useful tool for quantifying superposition in high-dimensional spaces.

\begin{definition}[\(\varepsilon\)-Quasi-Identity Matrix] \label{def:qim}
A matrix \(I^{\leq\varepsilon} \in \mathbb{R}^{h \times h}\) is called an \textbf{$\varepsilon$-quasi-identity matrix} if it satisfies the following properties:
\[
    I^{\leq\varepsilon}_{i,j} = \begin{cases}
        1, & \text{if } i = j, \\
        \varepsilon_{i,j}, & \text{if } i \neq j, \text{ where } |\varepsilon_{i,j}| \leq \varepsilon,
    \end{cases}
\]
where each \(\varepsilon_{i,j}\) is a scalar satisfying \(|\varepsilon_{i,j}| \leq \varepsilon\), for some fixed \(\varepsilon \in [0, 1)\) which represents the maximum off-diagonal deviation from the identity matrix for all \(i, j \in [h]\).
\end{definition}

Now, let \(D \in \mathbb{R}^{d \times h}\) be a decoder (dictionary) matrix. If its columns are normalized, then the Gram matrix \(D^\top D\) becomes the \(\varepsilon\)-quasi-identity matrix \(I^{\leq\varepsilon}\) for some \(\varepsilon\), which is the maximum inner product magnitude between distinct dictionary column vectors. Formally,

\medskip
\begin{definition}[Quasi-Orthogonality of a Dictionary]
We define the \textbf{quasi-orthogonality} of a dictionary \(D\) as:
\[
\varepsilon := \max_{i \neq j} \left| \frac{D_{\cdot,i}^\top \, D_{\cdot,j}}{\|D_{\cdot,i}\|_2\cdot \|D_{\cdot,j}\|_2} \right|.
\]
\end{definition}

In practice, pre-trained SAE decoders often do not satisfy this unit-norm constraint. However, without loss of generality, we can construct an equivalent decomposition by normalizing each decoder column and scaling the index of the corresponding feature vector with the decoder column's norm. This transformation allows us to formalize the following theorem:

\medskip
\begin{theorem} [Linear-AFA] \label{theorem:main}
For a given input \(\mathbf{x}\), if an embedding vector $\mathbf{z}(\mathbf{x}) \in \mathbb{R}^d$ satisfies the linear representation hypothesis (LRH), then $\|\mathbf{z}(\mathbf{x})\|_2^2$ approximates $\|\mathbf{f}\|_2^2$, the square of $\ell_2$-norm of feature activations $\mathbf{f} \in \mathbb{R}^h$, with an error bound $\|\mathbf{f}\|_2^2 \in \left[ \frac{\|\mathbf{z(x)}\|_2^2}{1+\varepsilon (h-1)}, \frac{\|\mathbf{z(x)}\|_2^2}{1-\varepsilon (h-1)} \right]$, where $\varepsilon$ is the quasi-orthogonality of the dictionary \(D \in \mathbb{R}^{d \times h}\) used by the features, such that \(\mathbf{z}(\mathbf{x}) = D \cdot \mathbf{f}\).
\end{theorem}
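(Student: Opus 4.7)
The plan is to exploit the identity $\|\mathbf{z}(\mathbf{x})\|_2^2 = \mathbf{f}^\top D^\top D \, \mathbf{f}$ and use the fact that $D^\top D$ is an $\varepsilon$-quasi-identity matrix (after the column-normalization reduction described in the setup) to sandwich $\|\mathbf{z}(\mathbf{x})\|_2^2$ between multiplicative perturbations of $\|\mathbf{f}\|_2^2$. First I would invoke the normalization argument already given before the theorem to assume, without loss of generality, that $D$ has unit-norm columns, so that the Gram matrix $G := D^\top D$ satisfies $G_{i,i}=1$ and $|G_{i,j}| \le \varepsilon$ for $i \ne j$, i.e.\ $G = I^{\le \varepsilon}$ in the sense of Definition~\ref{def:qim}.

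Next I would expand
\[
\|\mathbf{z}(\mathbf{x})\|_2^2 \;=\; (D\mathbf{f})^\top (D\mathbf{f}) \;=\; \mathbf{f}^\top G \, \mathbf{f} \;=\; \sum_{i=1}^h f_i^2 \;+\; \sum_{i \ne j} \varepsilon_{i,j} \, f_i f_j,
\]
so that $\|\mathbf{z}(\mathbf{x})\|_2^2 = \|\mathbf{f}\|_2^2 + R$ with cross term $R := \sum_{i \ne j} \varepsilon_{i,j} f_i f_j$. The key step is to bound $|R|$ by a multiple of $\|\mathbf{f}\|_2^2$. Using $|\varepsilon_{i,j}| \le \varepsilon$ and the AM--GM inequality $|f_i f_j| \le \tfrac{1}{2}(f_i^2 + f_j^2)$, each variable $f_i^2$ appears in exactly $2(h-1)$ of the ordered off-diagonal pairs (weighted by $\tfrac12$), yielding
\[
|R| \;\le\; \varepsilon \sum_{i \ne j} |f_i||f_j| \;\le\; \varepsilon \, (h-1) \, \|\mathbf{f}\|_2^2.
\]

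Combining gives the two-sided bound $(1 - \varepsilon(h-1)) \|\mathbf{f}\|_2^2 \le \|\mathbf{z}(\mathbf{x})\|_2^2 \le (1 + \varepsilon(h-1))\|\mathbf{f}\|_2^2$, and solving for $\|\mathbf{f}\|_2^2$ under the implicit nondegeneracy assumption $\varepsilon(h-1) < 1$ produces exactly the stated interval. The main obstacle I anticipate is the cross-term bound: one has to be careful that the off-diagonal estimate is tight enough to yield the factor $(h-1)$ and not something weaker like $h$ or $h^2$; the AM--GM double-counting argument above is what makes this precise. A secondary subtlety is justifying the WLOG unit-norm reduction, which absorbs decoder-column norms into the components of $\mathbf{f}$ and therefore does not change $\|\mathbf{z}(\mathbf{x})\|_2$ but does change the reference feature vector---this just needs to be stated explicitly so that the $\varepsilon$ appearing in the bound is precisely the quasi-orthogonality of the normalized dictionary.
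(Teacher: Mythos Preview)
Your proposal is correct and follows essentially the same route as the paper: expand $\|\mathbf{z}(\mathbf{x})\|_2^2=\mathbf{f}^\top(D^\top D)\mathbf{f}$, separate the diagonal contribution $\|\mathbf{f}\|_2^2$ from the off-diagonal cross term, bound the latter by $\varepsilon(h-1)\|\mathbf{f}\|_2^2$, and invert. The only cosmetic difference is in the cross-term estimate: the paper rewrites $\sum_{i\ne j}|f_i||f_j|=\|\mathbf{f}\|_1^2-\|\mathbf{f}\|_2^2$ and then applies the $\ell_1$--$\ell_2$ inequality $\|\mathbf{f}\|_1\le\sqrt{h}\,\|\mathbf{f}\|_2$, whereas you reach the same $(h-1)\|\mathbf{f}\|_2^2$ bound directly via AM--GM and a counting argument; both are equivalent and equally sharp.
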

\vspace{-7pt}
\begin{proof}[Proof of Theorem]
Since $D$ is a $\varepsilon$-quasi-orthogonal dictionary, $D^\top D = I^{\leq\varepsilon}$, where $I^{\leq\varepsilon}$ is a $\varepsilon$-quasi-identity matrix. Then,
\vspace{-7pt}
\[
\|\mathbf{z}(\mathbf{x})\|_2^2 = \mathbf{z}(\mathbf{x})^\top \mathbf{z}(\mathbf{x}) = (D \mathbf{f})^\top (D \mathbf{f}) = \mathbf{f}^\top (D^\top D) \mathbf{f} = \mathbf{f}^\top (I^{\leq \varepsilon}) \mathbf{f} = \sum_{i=1}^{h}\sum_{j=1}^{h} {f}_i{f}_j I_{i,j}^{\leq \varepsilon}.
\]
By using Definition~\ref{def:qim} and applying Cauchy-Schwarz (\ref{eq:csi}),
\[
\left| \|\mathbf{z}(\mathbf{x})\|_2^2 - \|\mathbf{f}\|_2^2 \right| = \left| \sum_{i\in [h]}\sum_{j\in [h]} {f}_i{f}_j I_{i,j}^{\leq \varepsilon} - \sum_{i\in [h]}{{f}_i^2} \right|  \leq  \sum_{i\in [h]}\sum_{\substack{j\in[h];\\j\neq i}} |{f}_i||{f}_j||\varepsilon_{i,j}| \leq \varepsilon (\|\mathbf{f}\|_1^2-\|\mathbf{f}\|_2^2)
\]
\[
\leq \varepsilon(h\|\mathbf{f}\|_2^2 - \|\mathbf{f}\|_2^2) = \varepsilon (h-1) \|\mathbf{f}\|_2^2
\Rightarrow \|\mathbf{z}(\mathbf{x})\|_2^2 \in \left[ (1 - \varepsilon(h-1)) \|\mathbf{f}\|_2^2,\; (1 + \varepsilon(h-1)) \|\mathbf{f}\|_2^2 \right].
\]
\[
\therefore \|\mathbf{f}\|_2^2 \in \left[ {\|\mathbf{z}(\mathbf{x})\|_2^2}/{\left( 1 + \varepsilon(h-1) \right)},\; {\|\mathbf{z}(\mathbf{x})\|_2^2}/{\left( 1 - \varepsilon(h-1) \right)} \right].
\]
\end{proof}

\section{How to Measure \texorpdfstring{\(\varepsilon\)}{Epsilon}: Quasi-Orthogonality of Dictionary}

Although Theorem~\ref{theorem:main} provides a principled connection between \(\mathbf{z}\) and \(\mathbf{f}\), it relies on the quasi-orthogonality constant \(\varepsilon\), which is unknown in practice.
To address this, we introduce two approaches to obtain the range of \(\varepsilon\).

\paragraph{Upper Bound Approach based on the JL Lemma.}
The JL lemma (Theorem~\ref{theorem:jl}) has often been suggested as a mathematical intuition that machine learning models gain an advantage in high-dimensional representation due to the exponential growth of almost orthogonal vectors \citep{elhage2022superposition,ghilardi2024accelerating}. However, there is a lack of formulation regarding the closed form bound using the JL lemma. By deriving \(|\varepsilon| \leq \delta = \sqrt{{20 \ln [h]}/{d}}\) from the JL lemma using the method described in Appendix~\ref{sec:upper_bound_baseline}, we can define: 
\[\varepsilon_{\text{JL}} := \sqrt{\frac{20 \ln h}{d}}.\]
Although the JL lemma ensures the existence of a quasi-orthogonal basis within a bounded error, we find that pretrained SAE decoders often exceed this bound (see Appendix~\ref{sec:unreliable_dic}). This reveals a key limitation: even state-of-the-art SAEs fail to learn decoders that achieve the theoretically permitted level of quasi-orthogonality, making their use unreliable.

\paragraph{Lower Bound Approach with Pre-trained SAE Features.} 

A tighter and more practical bound can be obtained from SAE feature vectors for each input, from the lower end.
The closed-form error bound derived in Theorem~\ref{theorem:main} provides a way to estimate the quasi-orthogonality constant \(\varepsilon\). Specifically, we use a lower bound of \(\varepsilon\), based on the inequality \(\left| \|\mathbf{z}(\mathbf{x})\|_2^2 - \|\mathbf{f}\|_2^2 \right| / (h-1)\|\mathbf{f}\|_2^2 \leq \varepsilon\). However, this time, the ground-truth \(\mathbf{f}\) is not accessible. In this case, we can leverage the fact that SAEs are designed to estimate \(\mathbf{f}\), and denote this estimate as \(\mathbf{f}_\text{SAE}\). Thus, we can compute the epsilon lower bound:
\[
\varepsilon_\text{LBO}(\mathbf{x}) := \frac{\left| \|\mathbf{z}(\mathbf{x})\|_2^2 - \|\mathbf{f}_\text{SAE}(\mathbf{x})\|_2^2 \right|}{(h-1)\|\mathbf{f}_\text{SAE}(\mathbf{x})\|_2^2}
\]
In layman's terms, this bound represents the minimum \(\varepsilon\) of quasi-orthogonality that any decoder must have in order to reconstruct the observed feature activations from the embedding. Since \(\varepsilon_{\text{LBO}}\) is computed directly from the feature vectors -- the very structure that SAEs ultimately aim to uncover -- it offers a decoder-agnostic alternative that is free from the decoder-induced noise discussed in the upper bound approach.

\begin{figure}[t]
\begin{center}
\begin{minipage}[t]{0.495\textwidth}
  \vspace{0pt} 
  \includegraphics[scale=0.38]{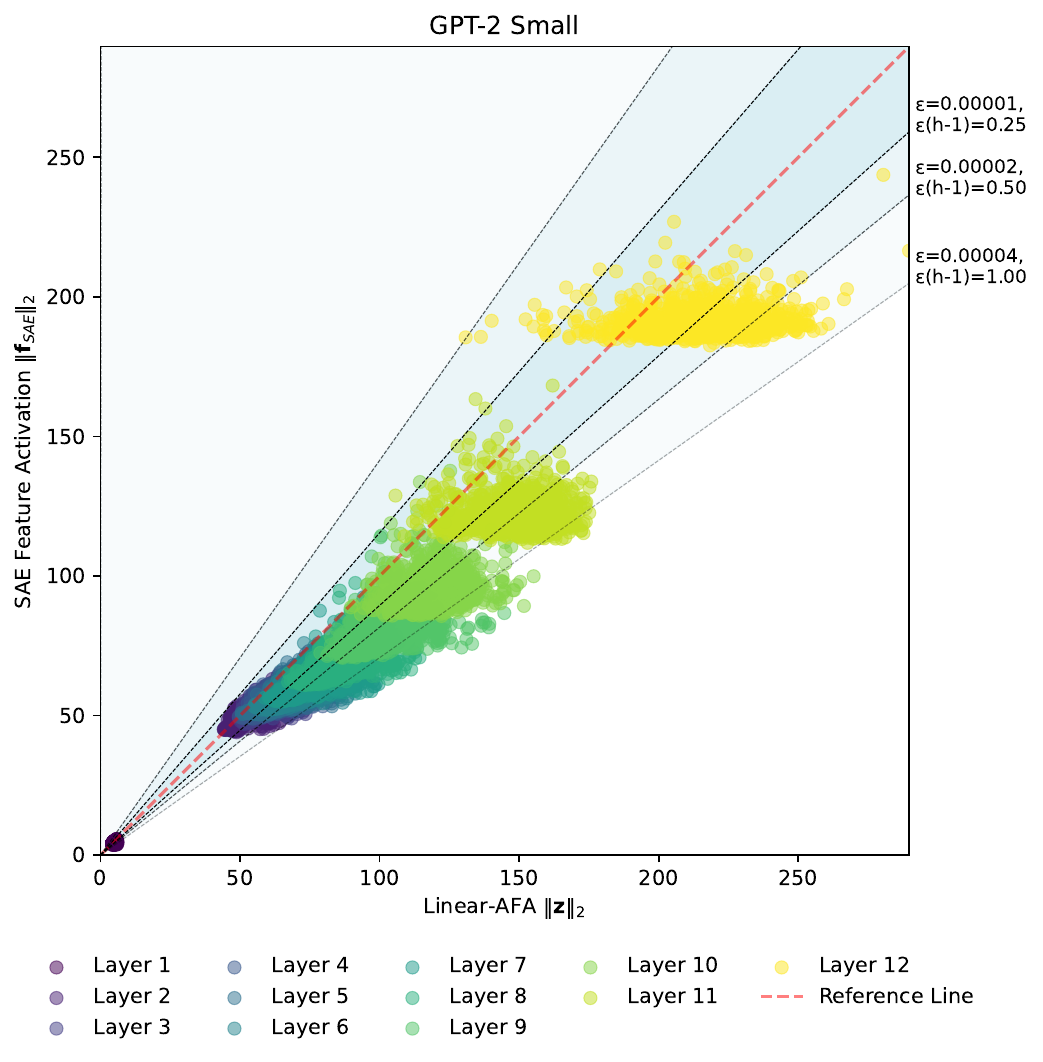}
\end{minipage}
\hfill
\begin{minipage}[t]{0.495\textwidth}
  \vspace{0pt} 
  \includegraphics[scale=0.38]{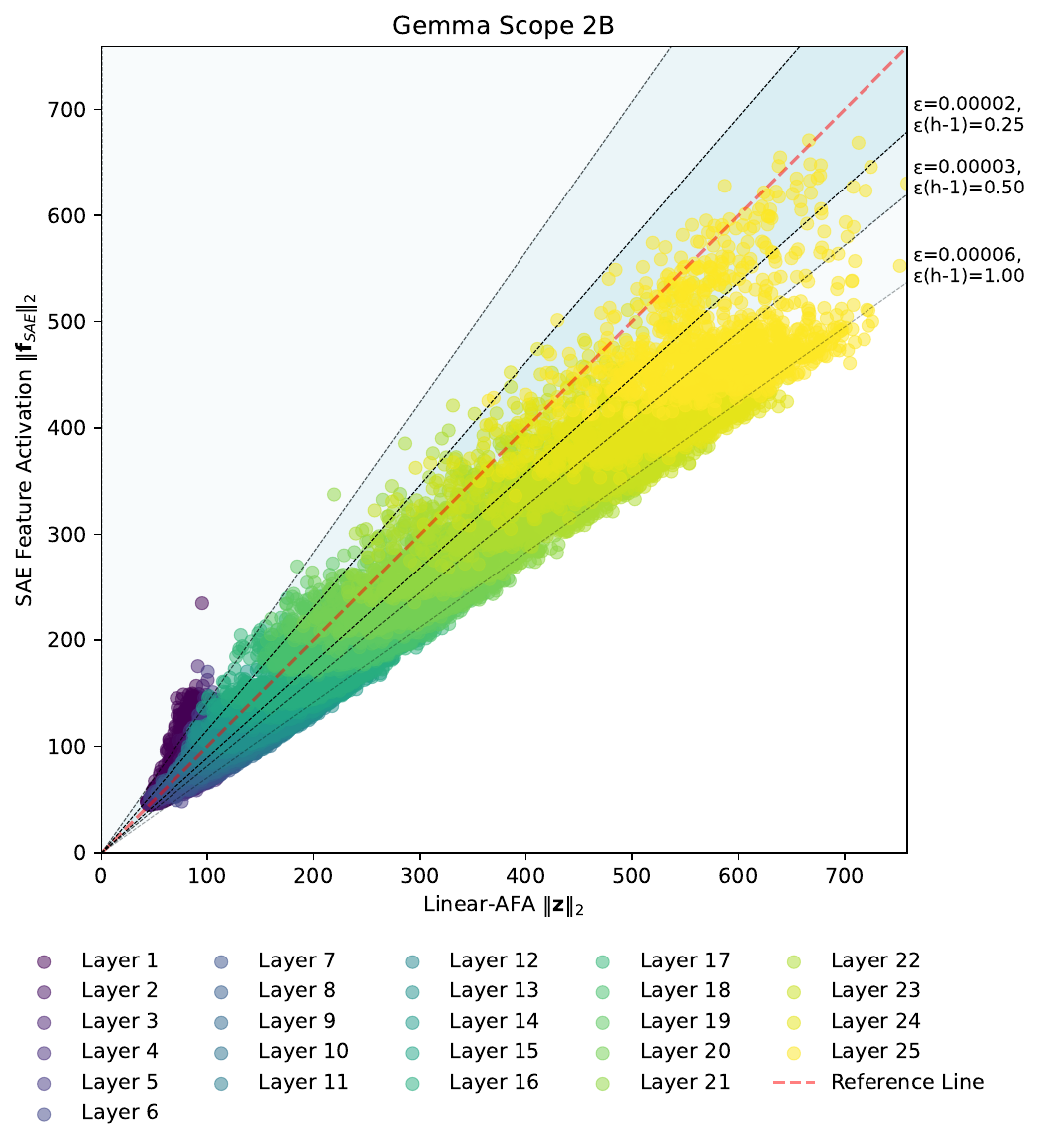}
\end{minipage}
\end{center}
\caption{
 \textbf{ZF Plots.} ZF plots shows the relationship between dense embedding vector norm (\(\|\mathbf{z}\|_2\)) and the learned feature activation norm (\(\|\mathbf{f}\|_2\)) by layers. Each point corresponds to a sequence input to the language model, allowing us to compare the magnitude and error direction (i.e., over- or under-activation) of the SAE's features compared to the red dashed reference line, input by input (For more details, see Fig.~\ref{fig:geo}). The shaded region visualizes the error bound based on \(\varepsilon(h-1)\) derived in Theorem~\ref{theorem:main}, where \(\varepsilon\) is the quasi-orthogonality of dictionaries and \(h\) is the dimensionality of the corresponding SAE feature vector. These plots are drawn using 1k input sequences of length 128 from the OpenWebText dataset.
 The phenomenon of increasing norm in higher layers is explained in \cite{hex2023residual}.
}
\label{fig:zf}
\end{figure}

\begin{figure}[t]
\begin{center}
\includegraphics[scale=0.63]{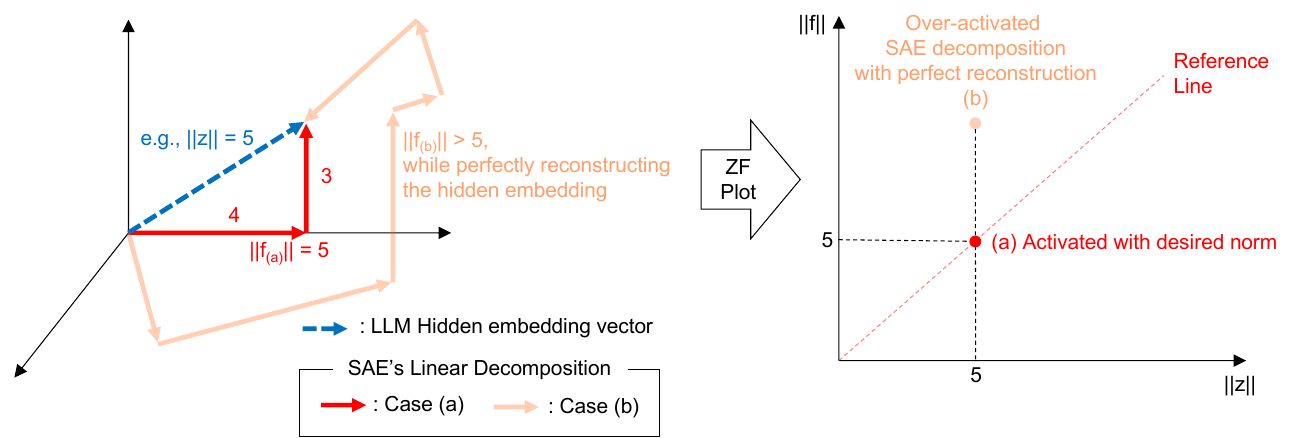}
\end{center}
\caption{
\textbf{Geometrical intuition behind ZF plots.}  
A dense embedding vector (blue dashed) can be perfectly reconstructed using an over-activated set of SAE features (light orange solid), even when their norm is misaligned. This leads to a discrepancy between the norm of the input (\(\|\mathbf{z}\|\)) and the reconstructed activation norm (\(\|\mathbf{f}\|\)). On the right, the ZF plot visualizes this mismatch per input, with the red dashed line indicating ideal alignment. Case (b) shows an over-activated decomposition; Farther distance from the reference line indicates either excessive activation despite seemingly good reconstruction or the absence of a low-\(\varepsilon\) dictionary that can reconstruct the input.
}
\label{fig:geo}
\end{figure}

\begin{algorithm}[t]
\caption{Top-AFA Activation Function}
\label{alg:topafa}
\begin{algorithmic}[1]
\Require \(\mathbf{z} \in \mathbb{R}^{B \times d}\), \(W_{\text{enc}} \in \mathbb{R}^{d \times h}\), \(W_{\text{dec}} \in \mathbb{R}^{h \times d}\), \(\mathbf{b}_{\text{enc}} \in \mathbb{R}^{d}\), \(\mathbf{b}_{\text{dec}} \in \mathbb{R}^{d}\) \Comment{\(B\): batch size}
\Ensure Reconstructed embedding \(\hat{\mathbf{z}}\)
\State \(\mathbf{z}_{\text{cent}} \gets \mathbf{z} - \mathbf{b}_{\text{enc}}\)
\State \(a \gets \|\mathbf{z}_{\text{cent}}\|_2^2\)
\State \(\mathbf{f} \gets \operatorname{ReLU}(\mathbf{z}_{\text{cent}}\,W_{\text{enc}})\)
\State \(\mathbf{s} \gets (\mathbf{f} \odot \|W_{\text{dec}}\|_2)^2\) \Comment{\(\odot\): elementwise multiplication}
\State \(\pi \gets \operatorname{argsort}(\mathbf{s},\ \text{descending})\)
\State \(C \gets \operatorname{cumsum}(\mathbf{s}[\pi])\); set \(C[-1] \gets \kappa\) \Comment{\(\kappa\) is a large constant}
\State \(a_{\text{target}} \gets \sqrt{a}\), \(C \gets \sqrt{C}\)
\State \(k \gets \operatorname*{argmin}_k\{|C_k - a_{\text{target}}|\} + 1\)
\State Construct mask \(\mathbf{m}\) that retains the top \(k\) indices in \(\pi\)
\State \(\mathbf{f}_{\text{topk}} \gets \mathbf{f} \odot \mathbf{m}\)
\State \(\hat{\mathbf{z}} \gets \mathbf{f}_{\text{topk}}\,W_{\text{dec}} + \mathbf{b}_{\text{dec}}\)
\State \Return \(\hat{\mathbf{z}}\)
\end{algorithmic}
\end{algorithm}

\section{Evaluation Metric: Missing Link between z and f}

Despite recent advances in SAE evaluation methods such as SAE Bench \citep{karvonen2025saebench}, Figure~\ref{fig:intro}-(a) shows a key limitation in current metrics: they either evaluate the relationship between the input embedding \(\mathbf{z}\) and reconstructed embedding \(\mathbf{\hat{z}}\), or they measure the sparsity of the feature vector \(\mathbf{f}\) alone.
Attempts to bridge \(\mathbf{z}\) and \(\mathbf{f}\) have remained limited to measuring decoder cosine similarities, which do not assess their actual relationship based on inputs.
In contrast, our approach introduces an evaluation metric that directly links \(\mathbf{z}\) and \(\mathbf{f}\), enabling us to assess whether the feature activations are appropriately aligned with the input representation.
This relationship between \(\mathbf{z}\) and \(\mathbf{f}\) can be visualized using pretrained SAEs, as shown in Figure~\ref{fig:zf}.
The geometrical intuition that is explained in Figure~\ref{fig:geo} allows for a fundamentally different perspective: instead of treating sparsity as an isolated property, we evaluate whether the activations themselves are justified by the input.

If we apply this finding to the evaluation of GPT-2 Small and Gemma Scope 2B, Figure~\ref{fig:violin} shows that $\varepsilon$LBO offers 
a different perspective 
compared to mean squared error (MSE), which is the most commonly used metric for evaluating sparse autoencoders \citep{gao2024scaling, bussmann2024batchtopk}.
While overall trends are similar, there are differences in distributional shape (e.g., long tails), modality (e.g., unimodal vs. bimodal), and ranking order (MSE-based ranking vs. $\varepsilon$LBO-based ranking).

\begin{figure}[t]
  \centering
  \includegraphics[width=1\linewidth]{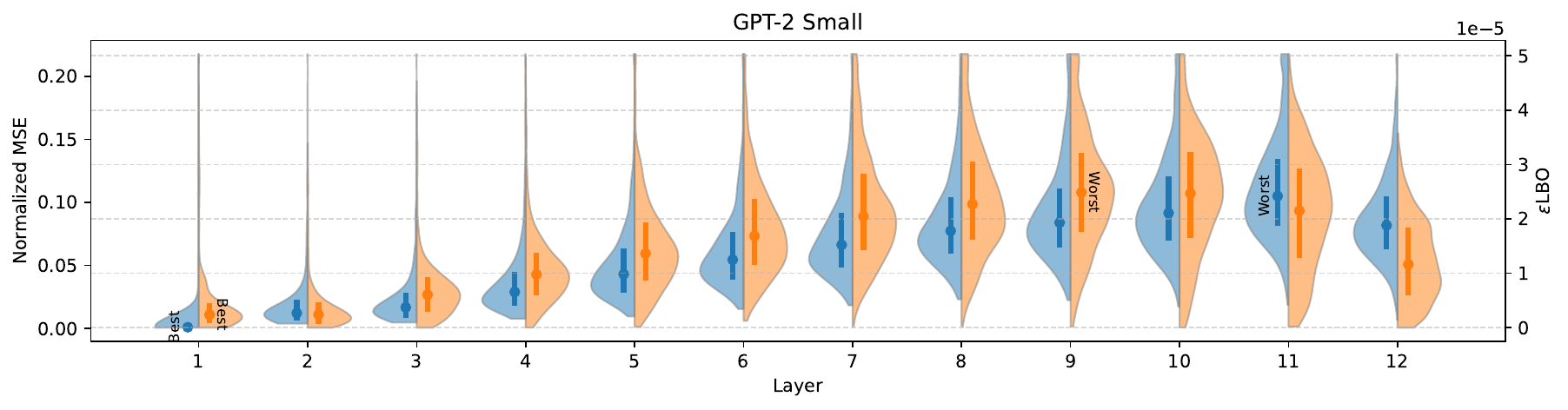} \\
    \includegraphics[width=1\linewidth]{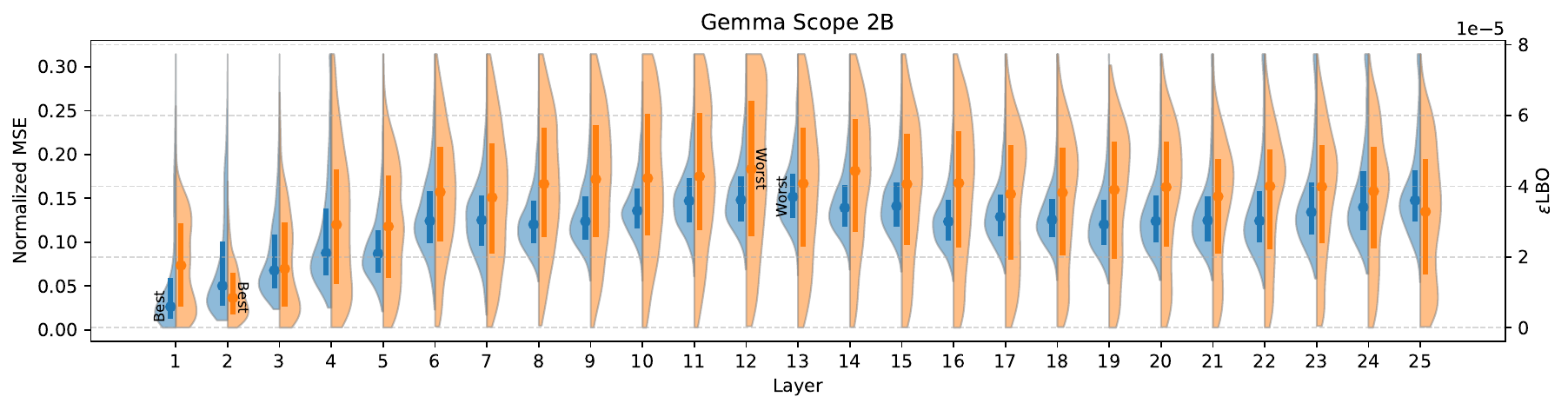}
\caption{
\textbf{Violin Plots for Comparing Different Evaluations of SAEs} (Lower values are better).
Blue violins (left) represent the normalized MSE (NMSE), which reflects reconstruction loss, while orange violins (right) correspond to $\varepsilon$LBO, which measures quasi-orthogonality.
Distributions are computed over 1k input sequences of length 128 from the OpenWebText dataset, and the region of each violin captures 99\% of the distribution to mitigate the influence of extreme outliers. 
Higher $\varepsilon$LBO values imply that no sufficiently orthogonal dictionary can explain the feature activation, suggesting poor separation among the activated features. 
While overall trends across layers are similar, the best- and worst-ranked layers and the distributional shapes vary, showing differences between NMSE and $\varepsilon$LBO.
These results show that the two evaluation metrics yield different evaluations of the same models.
}
\label{fig:violin}
\end{figure}

\section{Designing a Novel Sparse Autoencoder}

\paragraph{Activation Function.}
Top-\(k\) based activations suffer from a fundamental limitation: the expected sparsity level \(E[\ell_0]\) remains unknown and theoretically unjustified \citep{gao2024scaling}.
To address this, we propose the top-AFA, an activation function that adaptively selects the number of active features per input by matching the activation norm to a theoretically grounded target derived from the input embedding norm. This removes the need to manually set or estimate \(k\), and instead provides an indirect estimate of \(E[\ell_0]\).
The core idea behind top-AFA is to activate the minimum number of features such that the sparse feature vector norm approximates the AFA target \(\|\mathbf{z}\|_2\), as described in Figure~\ref{fig:intro}-(b) and Algorithm~\ref{alg:topafa}. From this perspective, sparsity and reconstruction are no longer in a trade-off relationship.
Instead, sparsity can be chosen \textit{for} reconstruction.

\paragraph{Loss Function.}
Can we design an SAE loss function that accounts for the amount of feature activation expected based on each input vector? 
A natural option would be to directly use \(\varepsilon\)LBO; however, it has certain limitations when applied as a training loss (see Appendix~\ref{sec:why_not_elbo}).  
Given that our key theoretical takeaway is \(\lim_{\varepsilon \to 0} \|\mathbf{f}\|_2 = \|\mathbf{z}\|_2\), we define the following simple loss function:
\(
\mathcal{L}_{\text{AFA}} = (\|\mathbf{f}\|_2 - \|\mathbf{z}\|_2)^2.
\)
Adding \(\mathcal{L}_{\text{AFA}}\) to the standard loss in Equation~(\ref{eq:sae}) yields the following objective:\footnote{We follow prior work in setting the auxiliary loss weight \(\alpha\) to 1/32, a commonly used value in SAE training \citep{gao2024scaling, bussmann2024batchtopk}.}
\[
\mathcal{L}(\mathbf{x}) = \|\mathbf{x} - \hat{\mathbf{x}}\|_2^2 + \alpha \mathcal{L}_{\text{aux}} + \lambda_\text{AFA} \mathcal{L}_{\text{AFA}}.
\]

Note that this additional loss term does not shift the optimum of the reconstruction objective, since the \(\ell_2\) reconstruction loss is minimized when $D\mathbf{f}=\mathbf{z}$, and under this condition, Theorem~\ref{theorem:main} guarantees that $\mathbb{E}_\varepsilon[\|\mathbf{f}\|_2] = \|\mathbf{z}\|_2$.

\paragraph{Experimental Results.}
We follow the experimental setup of \cite{bussmann2024batchtopk}, using \(h = 16 \times d\), but extend the evaluation to layers 6 and 7 in addition to layer 8. Figure~\ref{fig:activation_func} presents results for GPT-2, comparing Top-AFA with  top-\(k\) and batch top-\(k\). Notably, Top-AFA achieves better reconstruction performance than top-\(k\)-based baselines and exceeds the scaling law boundary observed by \cite{gao2024scaling}.\footnote{Layer 8 was used in \cite{bussmann2024batchtopk} to validate the performance of batch top-\(k\).} This suggests that adaptively selecting activations based on AFA can lead to improvements beyond what fixed-sparsity approaches can achieve. However, as shown in Appendix~\ref{sec:act_full}, the effectiveness of Top-AFA depends on proper tuning of the coefficient associated with the AFA loss \(\mathcal{L}_{\text{AFA}}\), and a stable coefficient value of 1/16 was observed across all layers.\footnote{
While Top-AFA introduces a new hyperparameter $\lambda_{\text{AFA}}$, it differs fundamentally from the fixed sparsity hyperparameter $k$ used in top-$k$ activations. Specifically, \(\lambda_{\text{AFA}}\) is a training-time regularization coefficient whose influence is limited to the optimization phase, whereas \(k\) directly relates to inference-time because it refers to the number of active features for each input example.
}

\begin{figure}[t]
\begin{center}
\begin{minipage}[t]{0.32\textwidth}
  \includegraphics[scale=0.36]{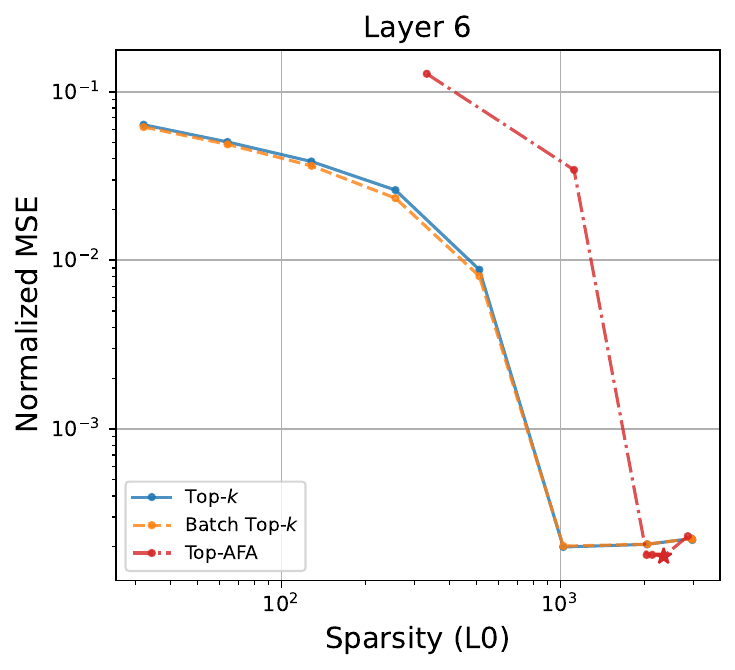}
\end{minipage}
\hfill
\begin{minipage}[t]{0.32\textwidth}
  \includegraphics[scale=0.36]{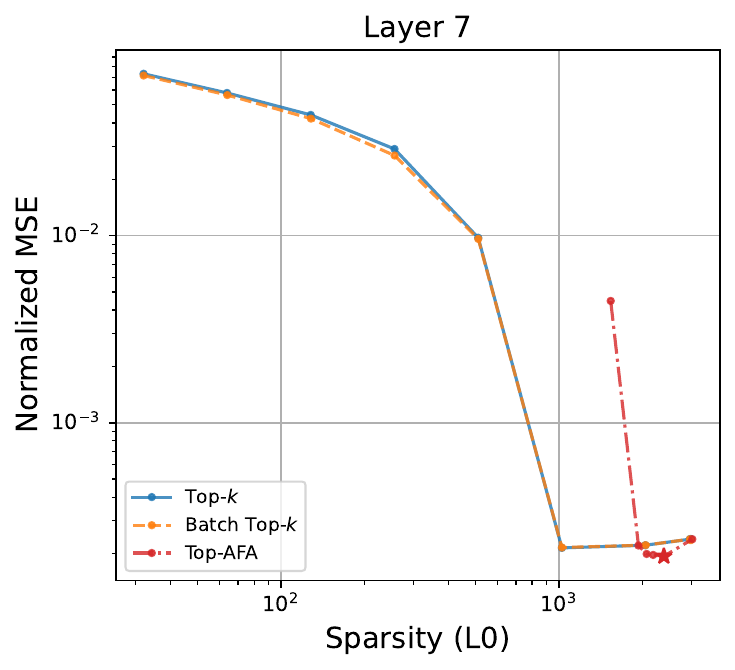}
\end{minipage}
\hfill
\begin{minipage}[t]{0.32\textwidth}
  \includegraphics[scale=0.36]{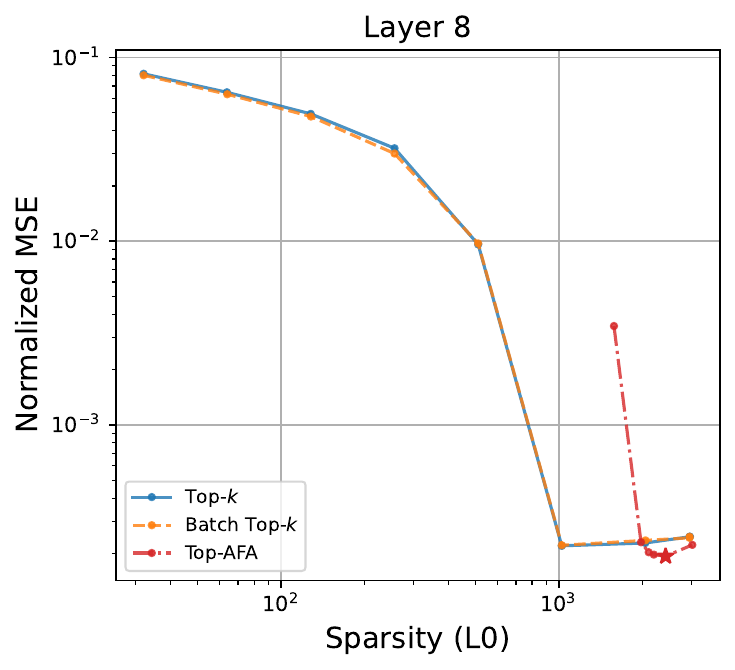}
\end{minipage}
\end{center}
\caption{
Reconstruction performance of Top-AFA compared to top-\(k\) and batch top-\(k\) activation functions on GPT-2 layers 6, 7, and 8. These layers were chosen to represent the model’s middle depth, with layer 6 positioned at the center of GPT-2’s 12-layer architecture.
Layer 8 was also used in \cite{bussmann2024batchtopk}, allowing for a direct comparison with prior work.
The SAE with minimum reconstruction loss at each layer is marked with a star (\textcolor{red}{\(\star\)}).  
Top-AFA outperforms other baselines on all tested layers, where its MSE (reconstruction loss) surpasses the scaling law boundary reported by \cite{gao2024scaling}   
These findings demonstrate that adaptively selecting activations can surpass the limitations of fixed-sparsity approaches. Detailed settings and results are provided in Appendix~\ref{sec:act_full}.
}
\label{fig:activation_func}
\end{figure}

\section{Conclusion}

We present a new theoretical and practical framework for SAEs by addressing a core limitation of top-\(k\) activations: the lack of principled \(k\) selection.  
Our Approximate Feature Activation (AFA) formulation provides a closed-form estimate of the \(\ell_2\)-norm of sparse activations, leading to \(\varepsilon_{\text{LBO}}\), the first metric linking input embeddings to activation magnitudes.  
Building on this, we propose top-AFA SAE, an SAE that adaptively selects active features by matching activation norms to dense embedding norms, removing the need to tune \(k\).
Experiments on GPT-2 show that top-AFA SAE outperforms top-\(k\) and batch top-\(k\) SAE.
In conclusion, our research sheds light on a key limitation in the existing literature -- the inaccessibility of \(\mathbb{E}[\ell_0]\) -- by taking a simple yet novel approach: approximating \(\mathbb{E}[\ell_2]\) ultimately provides a path towards obtaining \(\ell_0\).

\section*{Acknowledgements}
We thank Atticus Geiger for discussions with Adam Davies that influenced our early thinking on this work. This research was supported in part by award HR0011249XXX from the U.S. Defense Advanced Research Projects Agency Friction for Accountability in Conversational Transactions (FACT) program and the Illinois Computes project which is supported by the University of Illinois Urbana-Champaign and the University of Illinois System, and used the Delta advanced computing and data resource which is supported by the National Science Foundation (award OAC 2005572) and the State of Illinois. Delta is a joint effort of the University of Illinois Urbana-Champaign and its National Center for Supercomputing Applications. 
Adam Davies is supported by the NSF and the Institute of Education Sciences, U.S. Department of Education, through Award \#2229612 (National AI Institute for Inclusive Intelligent Technologies for Education). Any opinions, findings, and conclusions or recommendations expressed in this material are those of the author(s) and do not necessarily reflect the views of DARPA, the NSF, or the U.S. Department of Education.

\bibliography{colm2025_conference}

\begin{thebibliography}{37}
\providecommand{\natexlab}[1]{#1}
\providecommand{\url}[1]{\texttt{#1}}
\expandafter\ifx\csname urlstyle\endcsname\relax
  \providecommand{\doi}[1]{doi: #1}\else
  \providecommand{\doi}{doi: \begingroup \urlstyle{rm}\Url}\fi

\bibitem[Adam~Jermyn(2024)]{jermyn2024ghostgrads}
Adly~Templeton Adam~Jermyn.
\newblock Ghost grads: An improvement on resampling, 2024.
\newblock URL \url{https://transformer-circuits.pub/2024/jan-update/index.html#dict-learning-resampling}.

\bibitem[Bart~Bussmann(2024)]{matryoshka_multilevel2024}
Neel~Nanda Bart~Bussmann, Patrick~Leask.
\newblock Learning multi-level features with matryoshka saes.
\newblock \url{https://www.lesswrong.com/posts/rKM9b6B2LqwSB5ToN/learning-multi-level-features-with-matryoshka-saes}, 2024.

\bibitem[Bloom(2024)]{bloom2024gpt2residualsaes}
Joseph Bloom.
\newblock Open source sparse autoencoders for all residual stream layers of gpt2 small.
\newblock \url{https://www.alignmentforum.org/posts/f9EgfLSurAiqRJySD/open-source-sparse-autoencoders-for-all-residual-stream}, 2024.

\bibitem[Bricken et~al.(2023)Bricken, Templeton, Batson, Chen, Jermyn, Conerly, Turner, Anil, Denison, Askell, et~al.]{bricken2023towards}
Trenton Bricken, Adly Templeton, Joshua Batson, Brian Chen, Adam Jermyn, Tom Conerly, Nick Turner, Cem Anil, Carson Denison, Amanda Askell, et~al.
\newblock Towards monosemanticity: Decomposing language models with dictionary learning.
\newblock \emph{Transformer Circuits Thread}, 2, 2023.

\bibitem[Bussmann et~al.(2024)Bussmann, Leask, and Nanda]{bussmann2024batchtopk}
Bart Bussmann, Patrick Leask, and Neel Nanda.
\newblock Batchtopk sparse autoencoders.
\newblock \emph{arXiv preprint arXiv:2412.06410}, 2024.

\bibitem[Chaudhary \& Geiger(2024)Chaudhary and Geiger]{chaudhary2024evaluating}
Maheep Chaudhary and Atticus Geiger.
\newblock Evaluating open-source sparse autoencoders on disentangling factual knowledge in gpt-2 small.
\newblock \emph{arXiv preprint arXiv:2409.04478}, 2024.

\bibitem[Elhage et~al.(2022)Elhage, Hume, Olsson, Schiefer, Henighan, Kravec, Hatfield-Dodds, Lasenby, Drain, Chen, Grosse, McCandlish, Kaplan, Amodei, Wattenberg, and Olah]{elhage2022superposition}
Nelson Elhage, Tristan Hume, Catherine Olsson, Nicholas Schiefer, Tom Henighan, Shauna Kravec, Zac Hatfield-Dodds, Robert Lasenby, Dawn Drain, Carol Chen, Roger Grosse, Sam McCandlish, Jared Kaplan, Dario Amodei, Martin Wattenberg, and Christopher Olah.
\newblock Toy models of superposition.
\newblock \emph{Transformer Circuits Thread}, 2022.
\newblock URL \url{https://transformer-circuits.pub/2022/toy_model/index.html}.

\bibitem[Engels et~al.(2024{\natexlab{a}})Engels, Michaud, Liao, Gurnee, and Tegmark]{engels2024not}
Joshua Engels, Eric~J Michaud, Isaac Liao, Wes Gurnee, and Max Tegmark.
\newblock Not all language model features are linear.
\newblock \emph{arXiv preprint arXiv:2405.14860}, 2024{\natexlab{a}}.

\bibitem[Engels et~al.(2024{\natexlab{b}})Engels, Smith, and Tegmark]{engels2024decomposing}
Joshua Engels, Logan~Riggs Smith, and Max Tegmark.
\newblock Decomposing the dark matter of sparse autoencoders, 2024{\natexlab{b}}.
\newblock URL \url{https://openreview.net/forum?id=5IZfo98rqr}.

\bibitem[Gao et~al.(2024)Gao, la~Tour, Tillman, Goh, Troll, Radford, Sutskever, Leike, and Wu]{gao2024scaling}
Leo Gao, Tom~Dupr{\'e} la~Tour, Henk Tillman, Gabriel Goh, Rajan Troll, Alec Radford, Ilya Sutskever, Jan Leike, and Jeffrey Wu.
\newblock Scaling and evaluating sparse autoencoders.
\newblock \emph{arXiv preprint arXiv:2406.04093}, 2024.

\bibitem[{Gemma Team}(2024)]{gemma_2024}
{Gemma Team}.
\newblock Gemma.
\newblock 2024.
\newblock \doi{10.34740/KAGGLE/M/3301}.
\newblock URL \url{https://www.kaggle.com/m/3301}.

\bibitem[Ghilardi et~al.(2024)Ghilardi, Belotti, Molinari, and Lim]{ghilardi2024accelerating}
Davide Ghilardi, Federico Belotti, Marco Molinari, and Jaehyuk Lim.
\newblock Accelerating sparse autoencoder training via layer-wise transfer learning in large language models.
\newblock In \emph{Proceedings of the 7th BlackboxNLP Workshop: Analyzing and Interpreting Neural Networks for NLP}, pp.\  530--550, 2024.

\bibitem[Har-Peled(2005)]{har2005johnson}
Sariel Har-Peled.
\newblock The johnson-lindenstrauss lemma.
\newblock 2005.
\newblock URL \url{https://sarielhp.org/teach/13/b_574_rand_alg/lec/19_jl.pdf}.

\bibitem[He et~al.(2024)He, Shu, Ge, Chen, Wang, Zhou, Liu, Guo, Huang, Wu, Jiang, and Qiu]{he2024llamascopeextractingmillions}
Zhengfu He, Wentao Shu, Xuyang Ge, Lingjie Chen, Junxuan Wang, Yunhua Zhou, Frances Liu, Qipeng Guo, Xuanjing Huang, Zuxuan Wu, Yu-Gang Jiang, and Xipeng Qiu.
\newblock Llama scope: Extracting millions of features from llama-3.1-8b with sparse autoencoders, 2024.
\newblock URL \url{https://arxiv.org/abs/2410.20526}.

\bibitem[Hex \& Turn(2023)Hex and Turn]{hex2023residual}
Stefan Hex and Trout Turn.
\newblock Residual stream norms grow exponentially over the forward pass, 2023.
\newblock URL \url{https://www.lesswrong.com/posts/8mizBCm3dyc432nK8/residual-stream-norms-grow-exponentially-over-the-forward}.

\bibitem[Huben et~al.(2023)Huben, Cunningham, Smith, Ewart, and Sharkey]{huben2023sparse}
Robert Huben, Hoagy Cunningham, Logan~Riggs Smith, Aidan Ewart, and Lee Sharkey.
\newblock Sparse autoencoders find highly interpretable features in language models.
\newblock In \emph{The Twelfth International Conference on Learning Representations}, 2023.

\bibitem[Kainen \& K\r{u}rkov{\'a}(2020)Kainen and K\r{u}rkov{\'a}]{kainen2020quasiorthogonal}
Paul~C Kainen and V{\v{e}}ra K\r{u}rkov{\'a}.
\newblock Quasiorthogonal dimension.
\newblock In \emph{Beyond traditional probabilistic data processing techniques: Interval, fuzzy etc. Methods and their applications}, pp.\  615--629. Springer, 2020.

\bibitem[Kainen \& Kůrková(1993)Kainen and Kůrková]{KAINEN19937}
Paul~C. Kainen and Vĕra Kůrková.
\newblock Quasiorthogonal dimension of euclidean spaces.
\newblock \emph{Applied Mathematics Letters}, 6\penalty0 (3):\penalty0 7--10, 1993.
\newblock ISSN 0893-9659.
\newblock \doi{https://doi.org/10.1016/0893-9659(93)90023-G}.
\newblock URL \url{https://www.sciencedirect.com/science/article/pii/089396599390023G}.

\bibitem[Kakade(2010)]{KakadeJL}
Sham Kakade.
\newblock The johnson-lindenstrauss lemma, 2010.
\newblock URL \url{https://homes.cs.washington.edu/~sham/courses/stat991_mult/}.
\newblock Stat 991: Multivariate Analysis, Dimensionality Reduction, and Spectral Methods, Lecture 3.

\bibitem[Karvonen et~al.(2025)Karvonen, Rager, Lin, Tigges, Bloom, Chanin, Lau, Farrell, McDougall, Ayonrinde, et~al.]{karvonen2025saebench}
Adam Karvonen, Can Rager, Johnny Lin, Curt Tigges, Joseph Bloom, David Chanin, Yeu-Tong Lau, Eoin Farrell, Callum McDougall, Kola Ayonrinde, et~al.
\newblock Saebench: A comprehensive benchmark for sparse autoencoders in language model interpretability.
\newblock \emph{arXiv preprint arXiv:2503.09532}, 2025.

\bibitem[Laptev et~al.(2025)Laptev, Balagansky, Aksenov, and Gavrilov]{laptev2025analyze}
Daniil Laptev, Nikita Balagansky, Yaroslav Aksenov, and Daniil Gavrilov.
\newblock Analyze feature flow to enhance interpretation and steering in language models.
\newblock \emph{arXiv preprint arXiv:2502.03032}, 2025.

\bibitem[Leask et~al.(2025)Leask, Bussmann, Pearce, Bloom, Tigges, Moubayed, Sharkey, and Nanda]{leask2025sparse}
Patrick Leask, Bart Bussmann, Michael Pearce, Joseph Bloom, Curt Tigges, Noura~Al Moubayed, Lee Sharkey, and Neel Nanda.
\newblock Sparse autoencoders do not find canonical units of analysis.
\newblock \emph{arXiv preprint arXiv:2502.04878}, 2025.

\bibitem[Lee~Sharkey(2022)]{interim2022}
Beren~Millidge Lee~Sharkey, Dan~Braun.
\newblock Taking features out of superposition with sparse autoencoders, 2022.

\bibitem[Lieberum et~al.(2024)Lieberum, Rajamanoharan, Conmy, Smith, Sonnerat, Varma, Kram{\'a}r, Dragan, Shah, and Nanda]{lieberum2024gemma}
Tom Lieberum, Senthooran Rajamanoharan, Arthur Conmy, Lewis Smith, Nicolas Sonnerat, Vikrant Varma, J{\'a}nos Kram{\'a}r, Anca Dragan, Rohin Shah, and Neel Nanda.
\newblock Gemma scope: Open sparse autoencoders everywhere all at once on gemma 2.
\newblock \emph{arXiv preprint arXiv:2408.05147}, 2024.

\bibitem[Lin(2023)]{neuronpedia}
Johnny Lin.
\newblock Neuronpedia: Interactive reference and tooling for analyzing neural networks, 2023.
\newblock URL \url{https://www.neuronpedia.org}.
\newblock Software available from neuronpedia.org.

\bibitem[Lindenstrauss \& Johnson(1984)Lindenstrauss and Johnson]{lindenstrauss1984extensions}
W~Johnson~J Lindenstrauss and J~Johnson.
\newblock Extensions of lipschitz maps into a hilbert space.
\newblock \emph{Contemp. Math}, 26\penalty0 (189-206):\penalty0 2, 1984.

\bibitem[Makhzani \& Frey(2013)Makhzani and Frey]{makhzani2013k}
Alireza Makhzani and Brendan Frey.
\newblock K-sparse autoencoders.
\newblock \emph{International Conference on Learning Representations, ICLR 2014}, 2013.

\bibitem[Minegishi et~al.(2025)Minegishi, Furuta, Iwasawa, and Matsuo]{minegishi2025rethinking}
Gouki Minegishi, Hiroki Furuta, Yusuke Iwasawa, and Yutaka Matsuo.
\newblock Rethinking evaluation of sparse autoencoders through the representation of polysemous words.
\newblock In \emph{The Thirteenth International Conference on Learning Representations}, 2025.
\newblock URL \url{https://openreview.net/forum?id=HpUs2EXjOl}.

\bibitem[Nabeshima(2024)]{matryoshka2024}
Noa Nabeshima.
\newblock Matryoshka sparse autoencoders.
\newblock \url{https://www.lesswrong.com/posts/zbebxYCqsryPALh8C/matryoshka-sparse-autoencoders}, 2024.

\bibitem[Nanda et~al.(2023)Nanda, Lee, and Wattenberg]{nanda-etal-2023-emergent}
Neel Nanda, Andrew Lee, and Martin Wattenberg.
\newblock Emergent linear representations in world models of self-supervised sequence models.
\newblock In Yonatan Belinkov, Sophie Hao, Jaap Jumelet, Najoung Kim, Arya McCarthy, and Hosein Mohebbi (eds.), \emph{Proceedings of the 6th BlackboxNLP Workshop: Analyzing and Interpreting Neural Networks for NLP}, pp.\  16--30, Singapore, December 2023. Association for Computational Linguistics.
\newblock \doi{10.18653/v1/2023.blackboxnlp-1.2}.
\newblock URL \url{https://aclanthology.org/2023.blackboxnlp-1.2/}.

\bibitem[Park et~al.(2023)Park, Choe, and Veitch]{park2023the}
Kiho Park, Yo~Joong Choe, and Victor Veitch.
\newblock The linear representation hypothesis and the geometry of large language models.
\newblock In \emph{Causal Representation Learning Workshop at NeurIPS 2023}, 2023.
\newblock URL \url{https://openreview.net/forum?id=T0PoOJg8cK}.

\bibitem[Radford et~al.(2019)Radford, Wu, Child, Luan, Amodei, Sutskever, et~al.]{radford2019language}
Alec Radford, Jeffrey Wu, Rewon Child, David Luan, Dario Amodei, Ilya Sutskever, et~al.
\newblock Language models are unsupervised multitask learners.
\newblock \emph{OpenAI blog}, 1\penalty0 (8):\penalty0 9, 2019.

\bibitem[Rajamanoharan et~al.(2024)Rajamanoharan, Conmy, Smith, Lieberum, Varma, Kram{\'a}r, Shah, and Nanda]{rajamanoharan2024improving}
Senthooran Rajamanoharan, Arthur Conmy, Lewis Smith, Tom Lieberum, Vikrant Varma, J{\'a}nos Kram{\'a}r, Rohin Shah, and Neel Nanda.
\newblock Improving dictionary learning with gated sparse autoencoders.
\newblock \emph{arXiv preprint arXiv:2404.16014}, 2024.

\bibitem[Smith(2024)]{lewissmith2024}
Lewis Smith.
\newblock The ‘strong’ feature hypothesis could be wrong, 2024.

\bibitem[Till(2024)]{till_2024_saes_true_features}
Demian Till.
\newblock Do sparse autoencoders find "true features", 2024.

\bibitem[Vaswani et~al.(2017)Vaswani, Shazeer, Parmar, Uszkoreit, Jones, Gomez, Kaiser, and Polosukhin]{vaswani2017attention}
Ashish Vaswani, Noam Shazeer, Niki Parmar, Jakob Uszkoreit, Llion Jones, Aidan~N Gomez, {\L}ukasz Kaiser, and Illia Polosukhin.
\newblock Attention is all you need.
\newblock \emph{Advances in neural information processing systems}, 30, 2017.

\bibitem[Vempala(2005)]{vempala2005random}
Santosh~S Vempala.
\newblock \emph{The random projection method}, volume~65.
\newblock American Mathematical Soc., 2005.

\end{thebibliography}
\bibliographystyle{colm2025_conference}

\appendix

\section{Limitations and Future Work}

Theoretically, our analysis is limited to the linear representation hypothesis; extending AFA to more general superposition settings -- such as when \(\Phi(\cdot)\) is a two-layer neural network or other recent SAE variants such as Matryoshka SAEs \citep{matryoshka2024, matryoshka_multilevel2024} -- remains an open question. The JL lemma does not play a key role, but depending on the assumptions and claims, there are multiple versions of it that can be used \citep{har2005johnson}. It’s also possible to explore tighter versions with more assumptions that we haven’t used. Additionally, we do make a simplifying assumption by treating the embedding norm as its expectation; removing this assumption could lead to a more rigorous but possibly looser bound.

Empirically, in Figure~\ref{fig:activation_func}, we identified two limitations in the design of top-AFA SAE: (1) the necessity of introducing a new hyperparameter \(\lambda_{\text{AFA}}\) instead of \(k\)
and (2) the resulting high \(\ell_0\) values in this setting.
Despite these limitations, 
our approach allows the model to adaptively determine the effective number of active features for each input, rather than enforcing a fixed \(k\) across all inputs. This reflects the intuition that different examples may require different amounts of feature activation, which a fixed \(k\) cannot capture.
Nevertheless, a deeper investigation into the necessity of the loss coefficient remains a direction for future work.

\section{Additional Terminology} \label{sec:terminology}

\paragraph{Residual Stream} The residual stream refers to the hidden embedding vectors that flows through transformer layers via residual connections, typically updated as \(\mathbf{z}^{(l+1)}(\mathbf{x}) = \mathcal{T}^{(l)}(\mathbf{z}^{(l)}(\mathbf{x})) + \mathbf{z}^{(l)}(\mathbf{x}),\) where \(\mathcal{T}^{(l)}(\cdot)\) is the \(l\)-th Transformer block \citep{vaswani2017attention} in language models.

\paragraph{Overcompleteness} Dictionary is sometimes referred to as an “overcomplete basis” due to the intuition of using more vectors than the dimension. However, since a basis cannot be overcomplete by definition, we adopt the more precise term “dictionary” in this paper.

\section{Further Classification of LRH} \label{sec:lrh_classification}

It is notable that the validity of the linear representation hypothesis (LRH) depends on factors such as the dictionary size \( h \) relative to the embedding dimension \( d \), and the domain of inputs \( \mathbf{x} \) over which it is applied.  
For instance, in the limit as \( h \to \infty \), each column of \( W \) can memorize a specific input, and the feature vector \( \mathbf{f} \) can trivially become one-hot vector \citep{gao2024scaling, lewissmith2024}.  
A non-trivial hypothesis is when the dictionary size \( h \) is constrained. In practical implementation of SAEs, \( h = 16 \times d \) in GPT-2 Small \citep{bloom2024gpt2residualsaes}, whereas in Gemma Scope 2B, it ranges from 8 to 28 times the embedding dimension \citep{lieberum2024gemma}.

Depending on the range of \( \mathbf{x} \), we can classify LRH assumptions as follows:\footnote{\textbf{Causal Linear Representation Hypothesis:} Another interpretation of the linear representation hypothesis is suggested by \citet{park2023the}, demonstrating that \textit{causal inner product} can unify the representation in both embedding and unembedding, which satisfies Riesz isomorphism even through non-linear softmax unembedding layer.
However, interpreting LRH to incorporate the causality diverges from widely investigated context of SAE which does not involve softmax-invariance and thus Riesz isomorphism is trivially achieved by one-layer weights vectors of autoencoder network.}
\begin{itemize}
    \item \textbf{Weak LRH:} This version states that \textbf{some} features, though not necessarily all, can be represented as a linear combination human-interpretable features. Weak LRH has been widely supported by research \citep{nanda-etal-2023-emergent, lieberum2024gemma}.
    \item \textbf{Strong LRH:} This version states that \textbf{all} features can be represented as a linear combination. However, this stronger claim has been widely disproved \cite{lewissmith2024, engels2024not}.
\end{itemize}

\paragraph{Comparison between SH and LRH.} In both cases, the feature vector \(\mathbf{f} \in \mathbb{R}^h\) is assumed to result from a latent decomposition of the embedding vector \(\mathbf{z} \in \mathbb{R}^d\), although the ground truth of such decompositions may not exist in the real-world language models. 
Under the SH, there are no structural assumptions on the encoder function \(\Phi\), except that the feature space is higher-dimensional than the input space (i.e., \(h > d\)).  
The LRH, in contrast, assumes that \(\Phi(\cdot)\) is a linear  matrix \(W \in \mathbb{R}^{d \times h}\) but does not place any constraints on the relative dimensionality of the feature and input spaces.



\section{Proof Tools}

\begin{theorem}[$\ell_1$-$\ell_2$ Norm Inequality] \label{eq:csi}
For all $h \in \mathbb{N},~ \mathbf{f} \in \mathbb{R}^h$,
\[
    \|\mathbf{f}\|_1 \leq \sqrt{h} \|\mathbf{f}\|_2.
\]
\end{theorem}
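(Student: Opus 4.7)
The plan is to obtain the bound as an immediate application of the Cauchy--Schwarz inequality in $\mathbb{R}^h$ to the pair of vectors $(|f_1|, |f_2|, \ldots, |f_h|)$ and the all-ones vector $\mathbf{1} := (1,1,\ldots,1)$. Since $\|\mathbf{f}\|_1 = \langle (|f_1|,\ldots,|f_h|),\, \mathbf{1}\rangle$ while $\|\mathbf{1}\|_2 = \sqrt{h}$ and $\|(|f_1|,\ldots,|f_h|)\|_2 = \|\mathbf{f}\|_2$, a single invocation of Cauchy--Schwarz collapses the inequality into one line.

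First I would rewrite $\|\mathbf{f}\|_1 = \sum_{i=1}^{h} |f_i|\cdot 1$ to make the inner-product structure transparent. Then I would apply Cauchy--Schwarz in the form $\sum_i a_i b_i \le \bigl(\sum_i a_i^2\bigr)^{1/2}\bigl(\sum_i b_i^2\bigr)^{1/2}$ with $a_i = |f_i|$ and $b_i = 1$, which yields $\|\mathbf{f}\|_1 \le \bigl(\sum_i |f_i|^2\bigr)^{1/2}\cdot \sqrt{h} = \sqrt{h}\,\|\mathbf{f}\|_2$. The trivial case $\mathbf{f} = \mathbf{0}$ is handled automatically since both sides vanish, so no case split is needed.

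I do not anticipate any genuine obstacle: this is one of the most elementary finite-dimensional norm-comparison inequalities, and Cauchy--Schwarz yields a complete proof in a single step. Two alternative routes would work equally well --- Jensen's inequality applied to the convex map $t \mapsto t^2$ against the uniform distribution over the coordinates, or the general $\ell_p$-monotonicity inequality on finite-dimensional spaces with counting measure --- but Cauchy--Schwarz is the cleanest and is already consistent with the ``(CSI)'' abbreviation cited when this lemma is invoked in the proof of Theorem~\ref{theorem:main}. One could additionally note that equality holds precisely when $|f_1| = \cdots = |f_h|$, but this refinement is not required anywhere downstream in the paper.
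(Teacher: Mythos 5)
Your proof is correct and follows essentially the same route as the paper's: both pair the coordinatewise absolute-value vector $(|f_1|,\dots,|f_h|)$ with the all-ones vector and apply Cauchy--Schwarz in a single step. No gaps; the additional remarks on equality conditions and alternative routes are fine but not needed.
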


\begin{proof}
Define the vectors:
\[
    \mathbf{u} = (|\mathbf{f}_1|, |\mathbf{f}_2|, \dots, |\mathbf{f}_h|), \quad 
    \mathbf{v} = (1,1,\dots,1) \in \mathbb{R}^h.
\]

Then,
\[
    \mathbf{u} \cdot \mathbf{v} = \sum_{i=1}^{h} |\mathbf{f}_i| = \|\mathbf{f}\|_1.
\]

Applying the Cauchy-Schwarz inequality,
\[
    \|\mathbf{f}\|_1 = \mathbf{u} \cdot \mathbf{v} \leq \|\mathbf{u}\|_2 \|\mathbf{v}\|_2 \leq \sqrt{h} \|\mathbf{f}\|_2.
\]
\end{proof}

\section{Corollary of Theorem~\ref{theorem:main}}

\begin{corollary}
If $\mathbf{f} \in \{0, 1\}^h$, then $\|\mathbf{z}(\mathbf{x})\|_2^2$ approximates the number of activated features with the same error bound of the theorem.
\end{corollary}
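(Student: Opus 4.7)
The plan is to observe that for any binary vector the squared \(\ell_2\)-norm collapses to a count, and then invoke Theorem~\ref{theorem:main} verbatim. Concretely, for \(\mathbf{f} \in \{0,1\}^h\) each component satisfies \(f_i^2 = f_i\), so
\[
\|\mathbf{f}\|_2^2 = \sum_{i=1}^{h} f_i^2 = \sum_{i=1}^{h} f_i = \#\{i : f_i = 1\} = \|\mathbf{f}\|_0,
\]
which is exactly the number of activated features.

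Given this identity, I would simply substitute into the interval conclusion of Theorem~\ref{theorem:main} to get
\[
\#\{i : f_i = 1\} \in \left[ \frac{\|\mathbf{z}(\mathbf{x})\|_2^2}{1 + \varepsilon(h-1)},\; \frac{\|\mathbf{z}(\mathbf{x})\|_2^2}{1 - \varepsilon(h-1)} \right],
\]
where \(\varepsilon\) is the quasi-orthogonality of the dictionary \(D\) with \(\mathbf{z}(\mathbf{x}) = D\mathbf{f}\). Since the theorem's derivation made no assumption on the values taken by the entries of \(\mathbf{f}\) beyond those inherited from the \(\ell_1\)-\(\ell_2\) bound, the binary restriction only tightens, never violates, its hypotheses, and the bound transfers with no loss.

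There is no real obstacle here: the corollary is a one-line consequence of the identity \(\|\mathbf{f}\|_2^2 = \|\mathbf{f}\|_0\) for binary vectors. The only thing worth flagging is stylistic — whether to state the bound in terms of \(\|\mathbf{f}\|_0\) or explicitly as ``the number of activated features'' — and whether to comment that in the binary regime \(\|\mathbf{f}\|_1 = \|\mathbf{f}\|_2^2\), which would let one give a slightly sharper direct derivation by re-running the Cauchy-Schwarz step of the theorem's proof with this equality in place of Theorem~\ref{eq:csi}. I would keep the presentation minimal and just cite Theorem~\ref{theorem:main}.
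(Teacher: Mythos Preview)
Your proposal is correct and matches the paper's own proof essentially line for line: the paper also just observes that for \(\mathbf{f} \in \{0,1\}^h\) one has \(\|\mathbf{f}\|_2^2 = \sum_j f_j^2 = |\{j : f_j > 0\}|\), i.e., the number of activated features, and then the theorem's bound applies verbatim. Your additional remarks about a sharper direct derivation via \(\|\mathbf{f}\|_1 = \|\mathbf{f}\|_2^2\) go beyond what the paper does, but the core argument is identical.
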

\begin{proof}[Proof of Corollary]
If \( \mathbf{f} \in \{0, 1\}^h \), \( \|\mathbf{f}\|_2^2 = \sum_{j=1}^h{{f}_j^2} = \sum_{{f}_j \neq 0; j \in [h]}\{1\} = \left|\{j \mid {f}_j > 0, j \in [h]\}\right| \).
\end{proof}

\section{Derivation of Upper Bound based on the JL Lemma} \label{sec:upper_bound_baseline}

We consider a high-dimensional space where \( h > d \).  Let \(\{ \mathbf{b}_i \}_{i=1}^{h} \subset \mathbb{R}^h \) be an orthonormal basis, satisfying  
\(
\|\mathbf{b}_i\|_2 = 1, \text{ and } \mathbf{b}_i \perp \mathbf{b}_j \text{ for } i \neq j, \text{ where } i, j \in [h].
\)
By the JL lemma~\ref{theorem:jl}, there exists a linear mapping \( \Psi(\cdot) \) such that
\(
2(1 - \delta) \leq \| \Psi(\mathbf{b}_i) - \Psi(\mathbf{b}_j) \|_2^2 \leq 2(1 + \delta).
\)
Let \( \Psi(\mathbf{b}_i) = \mathbf{v}_i \). Then, 
\(
\| \Psi(\mathbf{b}_i) - \Psi(\mathbf{b}_j) \|_2^2 = \| \mathbf{v}_i - \mathbf{v}_j \|_2^2 = \| \mathbf{v}_i \|_2^2 + \| \mathbf{v}_j \|_2^2 - 2 \mathbf{v}_i \cdot \mathbf{v}_j.
\)
The JL lemma is typically proven using  
\(
f(\mathbf{b}_i) = \frac{1}{\sqrt{d}} A \mathbf{b}_i, \quad \text{where } A_{i,j} \sim \mathcal{N}(0,1),
\)
as shown in \cite{KakadeJL}. Taking expectation, we obtain
\(
\mathbb{E}[\| \mathbf{v}_i \|_2] = 1.
\)
If we assume \( \|\mathbf{v}_i\|_2 = 1 \) as a baseline, then  
\[
2 - 2\delta \leq 2 - 2\varepsilon \leq 2 + 2\delta \quad \Longleftrightarrow \quad |\varepsilon| \leq \delta,
\]
where \( \varepsilon \) denotes the quasi-orthogonality of \( h \) vectors \( \mathbf{v}_i \) in \( d \)-dimensional space by Definition~\ref{def:quasi-orthogonal}. Based on the JL Lemma constant (Theorem~\ref{theorem:jl}), we can define:  
\[
\varepsilon_{\text{JL}} := \delta = \sqrt{\frac{20 \ln h}{d}}.
\]

\section{Unreliable Quasi-Orthogonalities of Pre-trained SAE Decoders} 
\label{sec:unreliable_dic}

Since the JL lemma guarantees the existence of a linear transformation within a certain error range, the quasi-orthogonality of the ground-truth dictionary should not exceed this bound for a given hidden embedding.
Our formalization based on the JL lemma answer the following question; why not simply use the quasi-orthogonality of the pre-trained decoder? While this seems reasonable, empirical evidence suggests otherwise. As in \cite{leask2025sparse}, the maximum pairwise cosine similarity of SAE decoder weights tends to be very high, and thus exceeds the loose upper bound \(\varepsilon_{\text{JL}}\) which can be easily calculated (Appendix~\ref{sec:upper_bound_baseline}). This indicates that SAEs are unable to learn decoders that achieve quasi-orthogonal decomposition to the extent permitted by the available dimensionality, suggesting the pre-trained decoders can be highly unreliable.



\section{Why \texorpdfstring{\(\varepsilon\)}{e}LBO Cannot Be a Loss Function}\label{sec:why_not_elbo}

Ignoring the normalization effect from the denominator, the numerator of \(\varepsilon\)LBO takes the form \(\|\mathbf{z}\|_2^2 - \|\mathbf{f}\|_2^2\), which can be factorized as \((\|\mathbf{z}\|_2 - \|\mathbf{f}\|_2)(\|\mathbf{z}\|_2 + \|\mathbf{f}\|_2)\).  
Minimizing the second term \((\|\mathbf{z}\|_2 + \|\mathbf{f}\|_2)\) in this factorization is not only meaningless to reduce, but also introduces vulnerability to shrinkage effects \citep{rajamanoharan2024improving}. 

\section{Detailed Experimental Results} \label{sec:act_full}

We performed all experiments on a single NVIDIA A100 GPU. Following \cite{bussmann2024batchtopk}, each training iteration used 4,096 tokens from OpenWebText. We trained the models for 20k iterations (81,920,960 tokens in total).
Table~\ref{table:layer6},~\ref{table:layer7}, and ~\ref{table:layer8} were measured using the last 100 batches.

\begin{table}[t!]
\centering
\begin{tabular}{llrlrr}
\hline
 Layer   & Activation   & $k$   & $\lambda_\text{AFA}$   & Sparsity (L0)   & Normalized MSE $\pm \sigma$ \\
\hline
6 & Top-AFA & -- & $1/128$ & 331.62 & $0.127769 \pm 0.002493$ \\
6 & Top-AFA & -- & $1/64$ & 1118.46 & $0.034370 \pm 0.003631$ \\
6 & Top-AFA & -- & $1/32$ & 2037.74 & $0.000179 \pm 0.000044$ \\
6 & Top-AFA & -- & $1/24$ & 2133.73 & $0.000179 \pm 0.000047$ \\
6 & Top-AFA & -- & $1/16$ & 2344.23 & \underline{\textbf{$0.000176 \pm 0.000041$}} \\
6 & Top-AFA & -- & $1/8$ & 2862.85 & $0.000231 \pm 0.000098$ \\
6 & Batch Top-k & 32 & -- & 32.00 & $0.061787 \pm 0.000583$ \\
6 & Batch Top-k & 64 & -- & 64.00 & $0.048710 \pm 0.000464$ \\
6 & Batch Top-k & 128 & -- & 128.00 & $0.036358 \pm 0.000348$ \\
6 & Batch Top-k & 256 & -- & 256.00 & $0.023379 \pm 0.000231$ \\
6 & Batch Top-k & 512 & -- & 512.00 & $0.008044 \pm 0.000121$ \\
6 & Batch Top-k & 1024 & -- & 1024.00 & $0.000202 \pm 0.000021$ \\
6 & Batch Top-k & 2048 & -- & 2048.00 & $0.000207 \pm 0.000019$ \\
6 & Batch Top-k & 4096 & -- & 2968.77 & $0.000221 \pm 0.000031$ \\
6 & Batch Top-k & 8192 & -- & 2949.76 & $0.000224 \pm 0.000019$ \\
6 & Top-k & 32 & -- & 31.98 & $0.063715 \pm 0.000661$ \\
6 & Top-k & 64 & -- & 63.94 & $0.050356 \pm 0.000508$ \\
6 & Top-k & 128 & -- & 127.84 & $0.038570 \pm 0.000388$ \\
6 & Top-k & 256 & -- & 255.99 & $0.026100 \pm 0.000253$ \\
6 & Top-k & 512 & -- & 512.00 & $0.008793 \pm 0.000075$ \\
6 & Top-k & 1024 & -- & 1023.46 & $0.000199 \pm 0.000020$ \\
6 & Top-k & 2048 & -- & 2044.35 & $0.000207 \pm 0.000018$ \\
6 & Top-k & 4096 & -- & 2970.86 & $0.000219 \pm 0.000021$ \\
6 & Top-k & 8192 & -- & 2949.76 & $0.000224 \pm 0.000019$ \\
\hline
\end{tabular}
\caption{Detailed results for layer 6 in Figure~\ref{fig:activation_func}.}
\label{table:layer6}
\end{table}

\begin{table}[t!]
\centering
\begin{tabular}{llrlrr}
\hline
 Layer   & Activation   & $k$   & $\lambda_\text{AFA}$   & Sparsity (L0)   & Normalized MSE $\pm \sigma$ \\
\hline
7 & Top-AFA & -- & $1/128$ & 1533.98 & $0.004476 \pm 0.000346$ \\
7 & Top-AFA & -- & $1/64$ & 1930.60 & $0.000221 \pm 0.000028$ \\
7 & Top-AFA & -- & $1/32$ & 2066.28 & $0.000199 \pm 0.000042$ \\
7 & Top-AFA & -- & $1/24$ & 2180.70 & $0.000197 \pm 0.000057$ \\
7 & Top-AFA & -- & $1/16$ & 2382.96 & \underline{\textbf{$0.000193 \pm 0.000049$}} \\
7 & Top-AFA & -- & $1/8$ & 3016.48 & $0.000239 \pm 0.000082$ \\
7 & Batch Top-k & 32 & -- & 32.00 & $0.071729 \pm 0.000614$ \\
7 & Batch Top-k & 64 & -- & 64.00 & $0.056417 \pm 0.000491$ \\
7 & Batch Top-k & 128 & -- & 128.00 & $0.042214 \pm 0.000361$ \\
7 & Batch Top-k & 256 & -- & 256.00 & $0.026796 \pm 0.000234$ \\
7 & Batch Top-k & 512 & -- & 512.00 & $0.009608 \pm 0.000134$ \\
7 & Batch Top-k & 1024 & -- & 1024.00 & $0.000215 \pm 0.000016$ \\
7 & Batch Top-k & 2048 & -- & 2048.00 & $0.000222 \pm 0.000015$ \\
7 & Batch Top-k & 4096 & -- & 2974.06 & $0.000236 \pm 0.000022$ \\
7 & Batch Top-k & 8192 & -- & 2956.95 & $0.000239 \pm 0.000020$ \\
7 & Top-k & 32 & -- & 32.00 & $0.073273 \pm 0.000660$ \\
7 & Top-k & 64 & -- & 63.88 & $0.057877 \pm 0.000517$ \\
7 & Top-k & 128 & -- & 127.81 & $0.044168 \pm 0.000393$ \\
7 & Top-k & 256 & -- & 255.79 & $0.029050 \pm 0.000246$ \\
7 & Top-k & 512 & -- & 512.00 & $0.009737 \pm 0.000073$ \\
7 & Top-k & 1024 & -- & 1023.45 & $0.000214 \pm 0.000024$ \\
7 & Top-k & 2048 & -- & 2045.20 & $0.000221 \pm 0.000026$ \\
7 & Top-k & 4096 & -- & 2975.86 & $0.000239 \pm 0.000026$ \\
7 & Top-k & 8192 & -- & 2956.95 & $0.000239 \pm 0.000020$ \\
\hline
\end{tabular}
\caption{Detailed results for layer 7 in Figure~\ref{fig:activation_func}.}
\label{table:layer7}
\end{table}

\begin{table}[t!]
\centering
\begin{tabular}{llrlrr}
\hline
 Layer   & Activation   & $k$   & $\lambda_\text{AFA}$   & Sparsity (L0)   & Normalized MSE $\pm \sigma$ \\
\hline
8 & Top-AFA & -- & $1/128$ & 1577.59 & $0.003457 \pm 0.000286$ \\
8 & Top-AFA & -- & $1/64$ & 1973.62 & $0.000230 \pm 0.000026$ \\
8 & Top-AFA & -- & $1/32$ & 2103.63 & $0.000203 \pm 0.000050$ \\
8 & Top-AFA & -- & $1/24$ & 2195.08 & $0.000197 \pm 0.000047$ \\
8 & Top-AFA & -- & $1/16$ & 2420.21 & \underline{\textbf{$0.000193 \pm 0.000039$}} \\
8 & Top-AFA & -- & $1/8$ & 3019.70 & $0.000223 \pm 0.000042$ \\
8 & Batch Top-k & 32 & -- & 32.00 & $0.079982 \pm 0.000662$ \\
8 & Batch Top-k & 64 & -- & 64.00 & $0.063075 \pm 0.000524$ \\
8 & Batch Top-k & 128 & -- & 128.00 & $0.047696 \pm 0.000392$ \\
8 & Batch Top-k & 256 & -- & 256.00 & $0.029998 \pm 0.000254$ \\
8 & Batch Top-k & 512 & -- & 512.00 & $0.009727 \pm 0.000151$ \\
8 & Batch Top-k & 1024 & -- & 1024.00 & $0.000222 \pm 0.000044$ \\
8 & Batch Top-k & 2048 & -- & 2048.00 & $0.000236 \pm 0.000057$ \\
8 & Batch Top-k & 4096 & -- & 2950.05 & $0.000243 \pm 0.000023$ \\
8 & Batch Top-k & 8192 & -- & 2951.90 & $0.000247 \pm 0.000027$ \\
8 & Top-k & 32 & -- & 31.99 & $0.081479 \pm 0.000693$ \\
8 & Top-k & 64 & -- & 63.83 & $0.064612 \pm 0.000539$ \\
8 & Top-k & 128 & -- & 127.84 & $0.049319 \pm 0.000415$ \\
8 & Top-k & 256 & -- & 256.00 & $0.032053 \pm 0.000268$ \\
8 & Top-k & 512 & -- & 512.00 & $0.009610 \pm 0.000073$ \\
8 & Top-k & 1024 & -- & 1023.30 & $0.000220 \pm 0.000027$ \\
8 & Top-k & 2048 & -- & 2044.45 & $0.000228 \pm 0.000025$ \\
8 & Top-k & 4096 & -- & 2960.06 & $0.000247 \pm 0.000029$ \\
8 & Top-k & 8192 & -- & 2951.90 & $0.000247 \pm 0.000027$ \\
\hline
\end{tabular}
\caption{Detailed results for layer 8 in Figure~\ref{fig:activation_func}.}
\label{table:layer8}
\end{table}

\section{Evaluating the Designed Sparse Autoencoder: Top-AFA with \texorpdfstring{\(\varepsilon\)}{epsilon-}LBO}

\begin{figure}[t!]
  \centering
  \hspace{-.8em}
  \subfigure[]{
    \includegraphics[width=0.34\textwidth]{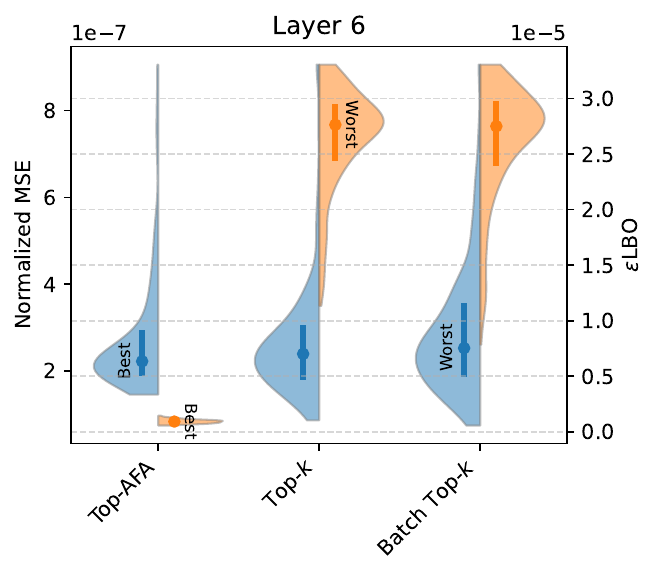}
  }\hspace{-1.1em}
  \subfigure[]{
    \includegraphics[width=0.34\textwidth]{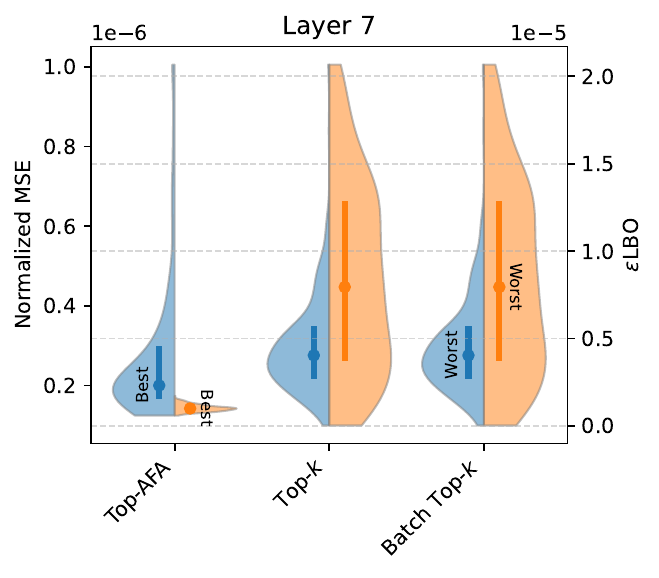}
  }\hspace{-1.1em}
  \subfigure[]{
    \includegraphics[width=0.34\textwidth]{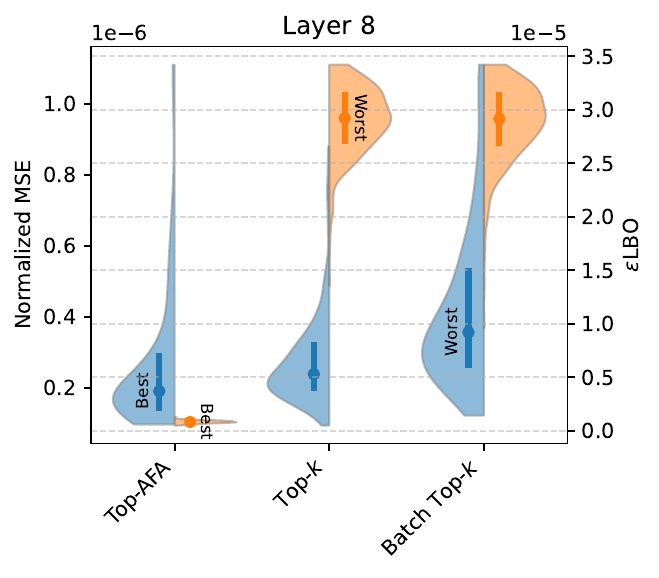}
  }
  \caption{
\textbf{Violin Plots for Comparing Different Activation Functions} (Lower values are better).
Distributions are computed over 300 input sequences of length 128 from the OpenWebText dataset.
The region of each violin captures 99\% of the distribution to mitigate the influence of extreme outliers.
The hyperparameter used for each layer corresponds to those that achieved the best reconstruction (the lowest NMSE) on the last batch of 20k training iterations.
Interestingly, the \(\varepsilon\)LBO of Top-AFA can converge to a low value through training, while other activation functions, Top-\(k\) and Batch Top-\(k\), fail to achieve a low \(\varepsilon\)LBO.
  }
  \label{fig:violin_acts}
\end{figure}

In Figure~\ref{fig:violin}, one may ask whether the multi-modal or long-tailed shape of the \(\varepsilon\)LBO distribution indicates not a more granular evaluation, but rather a noisier metric.
While it is true that \(\varepsilon\)LBO is a lower bound theoretically susceptible to noise induced by superposition interference, such questions can be partially addressed by evaluating the proposed SAE using our proposed metric.
Figure~\ref{fig:violin_acts} highlights two observations: (1) \(\varepsilon\)LBO can be better minimized to lower values with Top-AFA, and (2) existing activation functions exhibit significantly poorer \(\varepsilon\)LBO distributions, not simply attributable to noise. Therefore, \(\varepsilon\)LBO reveals the presence of a theoretically grounded objective that current SAE training methods have neglected.

\section{Complexity Analysis}

\begin{table}[t]
    \centering
    \begin{tabular}{cccccc}
        \toprule
        \textbf{Layer} & \textbf{Activation} & \textbf{L0} & \textbf{NMSE} & \textbf{Time (min)} \\
        \midrule
        6 & Top-AFA      & 2037.29  & 0.000164 & 69.40  \\
        6 & Batch Top-k  & 1024     & 0.000165 & 136.63 \\
        6 & Top-k        & 1023.49  & 0.000174 & 69.98  \\
        \midrule
        7 & Top-AFA      & 1930.10  & 0.000195 & 139.75 \\
        7 & Batch Top-k  & 2957.23  & 0.000181 & 138.88 \\
        7 & Top-k        & 2957.23  & 0.000181 & 70.61  \\
        \midrule
        8 & Top-AFA      & 2195.77  & 0.000166 & 142.04 \\
        8 & Batch Top-k  & 1024     & 0.000187 & 74.45  \\
        8 & Top-k        & 1023.32  & 0.000177 & 71.90  \\
        \bottomrule
    \end{tabular}
\caption{Layer-wise training time and reconstruction performance (NMSE), including L0 with the lowest NMSE on the last batch of 20k iterations. These results demonstrate that Top-AFA achieves training times comparable to those of other methods.}
\label{table:training_time}
\end{table}

Table~\ref{table:training_time} shows that Top-AFA achieves similar or better NMSE compared to other methods with training time that remains in a comparable range. To support our claim and explain the empirical results, we also conducted the complexity analysis. The time complexity of the encoding and decoding stages is summarized in Table~\ref{table:complexity_ops}.
The key difference lies in the top-k selection mechanism, where sorting dominates, as detailed in Table~\ref{table:topk_complexity}.
Since $k < h$, the leading term for Top-AFA is $\mathcal{O}(B h d + B h \log h)$, which is not higher than other activation methods.

\begin{table}[t]
    \centering
    \begin{tabular}{lll}
        \toprule
        \textbf{Stage} & \textbf{Operation} & \textbf{Time Complexity} \\
        \midrule
        Encoding & $\mathbf{x} \cdot W_{\text{enc}}$ & $\mathcal{O}(Bdh)$ \\
        Decoding & $\mathbf{f} \cdot W_{\text{dec}}$ & $\mathcal{O}(Bkd)$ \\
        \bottomrule
    \end{tabular}
\caption{Time complexity for encoding and decoding stages.}
\label{table:complexity_ops}
\end{table}

\begin{table}[t]
    \centering
    \begin{tabular}{lll}
        \toprule
        \textbf{Method} & \textbf{Top-k Selection Method} & \textbf{Time Complexity} \\
        \midrule
        Top-k & Sort per input ($B$ samples) & $\mathcal{O}(B h \log h)$ \\
        Batch Top-k & Sort across batch ($B h$ entries) & $\mathcal{O}(B h \log (B h))$ \\
        Top-AFA & Sort + cumulative sum + argmin per input & $\mathcal{O}(B h \log h)$ \\
        \bottomrule
    \end{tabular}
    \caption{Time complexity of top-k selection mechanisms.}
    \label{table:topk_complexity}
\end{table}

\begin{table}[t!]
    \centering
    \begin{tabular}{lccc>{\centering\arraybackslash}p{2.2cm}>{\centering\arraybackslash}p{2.2cm}}
        \toprule
        \textbf{Variable} & \textbf{Shape} & \textbf{Memory} & \textbf{(Batch) Top-k} & \textbf{Top-AFA} \\
        \midrule
        Input $\mathbf{x}$ & $B \times d$ & $\mathcal{O}(Bd)$ & \checkmark & \checkmark \\
        Hidden activations $\mathbf{f}$ & $B \times h$ & $\mathcal{O}(Bh)$ & \checkmark & \checkmark \\
        Reconstructed output $\mathbf{\hat{z}}$ & $B \times d$ & $\mathcal{O}(Bd)$ & \checkmark & \checkmark \\
        Encoder weights $W_{\text{enc}}$ & $d \times h$ & $\mathcal{O}(dh)$ & \checkmark & \checkmark \\
        Decoder weights $W_{\text{dec}}$ & $h \times d$ & $\mathcal{O}(dh)$ & \checkmark & \checkmark \\
        Top-k indices / mask & $\leq B \times h$ & $\mathcal{O}(Bh)$ & \checkmark & \checkmark \\
        Sort buffer & $B \times h$ & $\mathcal{O}(Bh)$ & --- & \checkmark \\
        \bottomrule
    \end{tabular}
    \caption{Memory complexity comparison across methods.}
    \label{table:memory_complexity}
\end{table}

The memory complexity for each method is compared in Table~\ref{table:memory_complexity}, showing that all methods exhibit comparable complexity.

\end{document}